\definecolor{darkblue}{rgb}{0.0,0.0,0.65}
\definecolor{darkred}{rgb}{0.68,0.05,0.0}
\definecolor{darkgreen}{rgb}{0.0,0.29,0.29}
\definecolor{darkpurple}{rgb}{0.47,0.09,0.29}
\theoremstyle{plain}
\newtheorem{theorem}{Theorem}[section]
\newtheorem{proposition}[theorem]{Proposition}
\newtheorem{lemma}[theorem]{Lemma}
\theoremstyle{definition}
\newtheorem{definition}[theorem]{Definition}
\newtheorem{example}[theorem]{Example}
\theoremstyle{remark}
\newtheorem{remark}[theorem]{Remark}
\def\[#1\]{\begin{align*}#1\end{align*}}
\NewDocumentCommand{\numberthis}{om}{%
  \IfNoValueTF{#1}{%
    \refstepcounter{equation}\tag{\theequation}%
  }{%
    \tag{#1}%
  }%
  \label{#2}%
}
\newcommand{\restatethm}[1]{%
    \begingroup
    \csname #1*\endcsname 
    \endgroup
}
\def\eqref#1{equation~\ref{#1}}
\def\1{\bm{1}}
\DeclareMathAlphabet{\mathsfit}{\encodingdefault}{\sfdefault}{m}{sl}
\SetMathAlphabet{\mathsfit}{bold}{\encodingdefault}{\sfdefault}{bx}{n}
\DeclareMathOperator*{\argmin}{arg\,min}
\newcommand{\defeq}[0]{\overset{\mbox{\normalfont\tiny def}}{=}}
\icmltitlerunning{Learning with Exact Invariances in Polynomial Time}
\begin{document}

\twocolumn[
\icmltitle{Learning with Exact Invariances in Polynomial Time}



\icmlsetsymbol{equal}{*}

\begin{icmlauthorlist}
\icmlauthor{Ashkan Soleymani}{equal,qqq}
\icmlauthor{Behrooz Tahmasebi}{equal,yyy}
\icmlauthor{Stefanie Jegelka}{zzz,yyy}
\icmlauthor{Patrick Jaillet}{qqq}
\end{icmlauthorlist}

\icmlaffiliation{zzz}{School of CIT, MCML, and MDSI, Technical University of Munich (TUM)}
\icmlaffiliation{yyy}{MIT EECS and MIT CSAIL}
\icmlaffiliation{qqq}{MIT EECS and MIT LIDS}


\icmlcorrespondingauthor{Ashkan Soleymani}{ashkanso@mit.edu}
\icmlcorrespondingauthor{Behrooz Tahmasebi}{bzt@mit.edu}

\icmlkeywords{Machine Learning, ICML}

\vskip 0.3in
]



\printAffiliationsAndNotice{\icmlEqualContribution} 

\begin{abstract}
  We study the statistical-computational trade-offs for learning with exact invariances (or symmetries) using kernel regression. Traditional methods, such as data augmentation, group averaging, canonicalization, and frame-averaging, either fail to provide a polynomial-time solution or are not applicable in the kernel setting. However, with oracle access to the geometric properties of the input space, we propose a polynomial-time algorithm that learns a classifier with \emph{exact} invariances. Moreover, our approach achieves the same excess population risk (or generalization error) as the original kernel regression problem. To the best of our knowledge, this is the first polynomial-time algorithm to achieve exact (not approximate) invariances in this context. Our proof leverages tools from differential geometry, spectral theory, and optimization. A key result in our development is a new reformulation of the problem of learning under invariances as optimizing an infinite number of linearly constrained convex quadratic programs, which may be of independent interest. 
\end{abstract}

\section{Introduction}

While humans can readily observe symmetries or invariances in systems, it is generally challenging for machines to detect and exploit these properties from data. The objective of machine learning with invariances is to develop approaches that enable models to be trained and utilized under the symmetries inherent in the data. This framework is broadly applicable across various domains in the natural sciences and physics, including atomistic systems \citep{grisafi2018symmetry}, molecular wavefunctions and electronic densities \citep{unke2021se}, interatomic potentials \citep{batzner20223}, and beyond \citep{batzner2023advancing}. While many applications involve Euclidean symmetries \citep{smidt2021euclidean}, the scope of such methods extends well beyond them to other geometries \citep{bronstein2017geometric}.

Learning with invariances has a longstanding history in machine learning \citep{hinton1987learning,kondor2008group}. In recent years, there has been significant interest in the development and analysis of learning methods that account for various types of invariances. This surge in interest is strongly motivated by many models showing considerable success in practice. Empirical evidence suggests the existence of algorithms that can effectively learn under invariances while exhibiting strong generalization and computational efficiency. However, from a theoretical perspective, much of the focus has been on the expressive power of models, generalization bounds, and sample complexity. There remains a relative lack of understanding regarding the statistical-computational trade-offs in learning under invariances, even in foundational settings such as kernel regression.


Symmetries can be incorporated into learning in multiple ways. An immediate solution for learning with invariances seems to be \emph{data augmentation} over the elements of the group. Moreover, some approaches to learning with invariances rely on \emph{group averaging}, a technique that involves summing over group elements. However, the typically large size of the group can make both of these approaches computationally prohibitive, even super-exponential in the dimension of input data. Alternative approaches, such as \emph{canonicalization} and \emph{frame averaging}, also suffer from issues like discontinuities and scalability challenges \citep{dym2024equivariant}.  

This paper seeks to address the following question:


\begin{tcolorbox} Can we obtain an invariant estimator for learning with invariances that achieves both strong generalization and computational efficiency? \end{tcolorbox}

The first contribution of this work is a detailed study of the problem of learning with invariances in the context of kernel methods.
Kernels, which have been among popular learning approaches, offer both statistical and computational efficiency \citep{scholkopf2018learning}. We argue that while group averaging fails to produce exactly invariant estimators within a computationally efficient time frame, alternative algorithms can generate invariant estimators for the kernel regression problem in time that is polylogarithmic in the size of the group. In other words, we demonstrate that it is possible to achieve an invariant estimator that is both computationally efficient and exhibits strong generalization. At first glance, this result may seem counterintuitive and even impossible, since it implies that enumerating all possible invariances is not required to design statistically efficient learning algorithms. This provides theoretical support for the empirical observation that computational efficiency and strong generalization are attainable in learning with invariances.
To the best of our knowledge, this is the first algorithm that is both statistically and computationally efficient for learning with invariances in the kernel setting,

Learning with invariances can be formulated as a \emph{nonconvex optimization} problem, which is not tractable directly. To design an efficient algorithm, we leverage the spectral theory of the Laplace-Beltrami operator on manifolds. Notably, since this operator commutes with all (isometric) group actions on the manifold, it is possible to find an orthonormal basis of Laplacian eigenfunctions such that each group action on the manifold acts on the eigenspaces of the Laplacian via orthogonal matrices. This theoretical framework allows us to reformulate the original problem of learning with invariances on manifolds as an \emph{infinite collection of finite-dimensional convex quadratic programs}--one for each eigenspace--each \emph{constrained by linear conditions}. By truncating the number of quadratic programs solved, we can efficiently approximate solutions to the primary nonconvex optimization problem, thereby approximating kernel solutions to the problem of learning with invariances. This reformulation not only enables us to derive a polynomial-time algorithm for kernel regression under invariances, but it may also have broader applications, the exploration of which we defer to future research.

Finally, we emphasize again that this work is centered on achieving \emph{exact} invariance, as many applications—especially neural networks with strong empirical performance—are explicitly designed to incorporate exact invariances by construction. In summary, this paper makes the following  contributions:

\vspace{-0.2cm}
\begin{itemize} 
\setlength{\itemindent}{-5pt}
\setlength\itemsep{-0.1em}
\item We initiate the exploration of statistical-computational trade-offs in the context of learning with exact invariances, focusing specifically on kernel regression.

\item We reformulate the problem of learning under invariances in kernel methods, leveraging differential geometry and spectral theory, and cast it as infinitely many convex quadratic programs with linear constraints, for which we derive an efficient solution in terms of time complexity. We trade off computational and statistical complexity by controlling the number of convex quadratic programs solved to obtain the estimator.

\item We introduce the first polynomial algorithm for learning with invariances in the general setting of kernel regression over manifolds. 

\end{itemize}

\section{Related Work}

Generalization bounds and sample complexity for learning with invariances have been extensively studied, particularly in the context of invariant kernels. Works such as \citet{elesedy2021provably}, \citet{bietti2021sample}, \citet{tahmasebi2023exact}, and \citet{mei2021learning} provide insights into this area. Additionally, studies on equivariant kernels \citep{elesedy2021provablyinv, petrache2023approximation} further our understanding of how equivariances affect learning. PAC-Bayesian methods have also been applied to derive generalization bounds under equivariances \citep{behboodi2022pac}. More recently, \citet{kiani2024on} explored the complexity of learning under symmetry constraints for gradient-based algorithms. For studies on the optimization of kernels under invariances, see \citet{teo2007convex}.

A variety of methods have been proposed to enhance the performance of kernel-based learning models. One prominent approach is the use of random feature models \citep{rahimi2007random}, which approximate kernels using randomly selected features. Low-rank kernel approximation techniques, such as the Nyström method \citep{williams2000using, drineas2005nystrom}, have also been proposed to reduce the computational complexity of kernel methods; see also \citet{bach2013sharp, cesa2015complexity}. Divide-and-conquer algorithms offer another pontential avenue for kernel approximation \citep{zhang2013divide}. Additionally, the impact of kernel approximation on learning accuracy is well-documented in \citet{cortes2010impact}.

Our work focuses on learning with invariances, which differs significantly from the tasks of learning invariances or measuring them in neural networks. For example, \citet{benton2020learning} address how neural networks can learn invariances, while \citet{goodfellow2009measuring} study methods to measure the degree of invariance in network architectures.

Invariance in kernel methods is not limited to group averaging. Other approaches such as frame averaging \citep{puny2021frame}, canonicalization \citep{kaba2023equivariance, ma2024canonization}, random projections \citep{dym2024low}, and parameter sharing \citep{ravanbakhsh2017equivariance} have also been proposed to construct invariant function classes. However, canonicalization and frame averaging face challenges, particularly concerning continuity, which has been addressed in recent works like \citet{dym2024equivariant}.

In specialized tasks such as graphs, image, and pointcloud data, Graph Neural Networks (GNNs) \citep{scarselli2008graph, xu2018powerful}, Convolutional Neural Networks (CNNs) \citep{krizhevsky2012imagenet, li2021survey}, and Pointnet \citep{qi2017pointnet, qi2017pointnetplus} have demonstrated the effectiveness of leveraging symmetries. Symmetries have also been successfully integrated into generative models \citep{bilovs2021scalable, niu2020permutation, kohler2020equivariant}. For a broader discussion on various types of invariances and their applications across machine learning tasks, see \citet{bronstein2017geometric}.

 \section{Background and Problem Statement}

\textbf{Notation.} We begin by establishing some frequently used notation. Let \(\mathcal{M}\) be a smooth, compact, and boundaryless \(d\)-dimensional Riemannian manifold.
 The uniform distribution over the manifold is the normalized volume element corresponding to its metric. We denote the space of square-integrable functions over \(\mathcal{M}\) by \(L^2(\mathcal{M})\) and the space of continuous functions by \(C(\mathcal{M})\). Furthermore, \(H^s(\mathcal{M})\) represents the Sobolev space of functions on \(\mathcal{M}\) with parameter \(s\), defined as the set of functions with square-integrable derivatives up to order \(s\). Larger values of \(s\) correspond to greater smoothness, and it holds that \(H^s(\mathcal{M}) \subseteq C(\mathcal{M})\) if and only if \(s > d/2\), a condition we will assume throughout this paper. For each \(n \in \mathbb{N}\), we define \([n] \coloneqq  \{1, 2, \ldots, n\}\). We use $\log$ to denote the logarithm with base 2. We refer to \Cref{app:manifold,app:functional_manifold} for a quick review of Riemannian manifolds.

\textbf{Problem statement.} We consider a general learning setup on a smooth, compact, and boundaryless Riemannian manifold \(\mathcal{M}\) of dimension \(d\). Our objective is to identify an estimator \(\widehat{f} \in \mathcal{F}\) from a feasible space of estimators \(\mathcal{F} \subseteq L^2(\mathcal{M})\), based on \(n\) independent and uniformly\footnote{We assume uniformity for simplicity, with results extending to non-uniform sampling with bounded density.} distributed labeled samples \(\mathcal{S} = \{(x_i, y_i) : i \in [n]\} \subseteq (\mathcal{M} \times \mathbb{R})^n\) drawn from the manifold. Here, the labels \(y_i\) for \(i \in [n]\) are produced based on the (unknown) ground truth regression function \(f^\star \in C(\mathcal{M})\), meaning that \(y_i = f^\star(x_i) + \epsilon_i\), for each $i \in [n]$,  where \(\epsilon_i\), $i \in [n]$, is a sequence of independent zero-mean random variables with variance bounded by \(\sigma^2\).
The population risk (or generalization error) of an estimator \(\widehat{f} \in L^2(\mathcal{M})\), which quantifies the quality of the estimation, is defined as:
\[
    \mathcal{R}(\widehat{f})\coloneqq 
    \mathbb{E}\left[\|\widehat{f} - f^\star\|^2_{L^2(\mathcal{M})}\right],
\]
where the expectation is taken over the randomness of the data and labels.

Given a dataset of size \(n\), finding estimators with minimal population risk can be quite complex, often requiring the resolution of non-convex optimization objectives. However, in scenarios where \(f^\star \in \mathcal{H}\), with \(\mathcal{H} \subseteq L^2(\mathcal{M})\) being a Reproducing Kernel Hilbert Space (RKHS), it is feasible to compute kernel-based estimators with low risk efficiently. Specifically, the Kernel Ridge Regression (KRR) estimator for the RKHS \(\mathcal{H} = H^s(\mathcal{M})\), denoted as \(\widehat{f}_{\operatorname{KRR}}\), achieves a population risk of \(\mathcal{R}(\widehat{f}_{\operatorname{KRR}}) = \mathcal{O}\left(n^{-s/(s + d/2)}\right)\) while being computable in time \(\mathcal{O}(n^3)\), assuming access to an oracle that computes the kernel associated with the space. Note that Sobolev spaces \(H^s(\mathcal{M})\) with \(s > d/2\) are RKHS. We refer the reader to \cref{app:sob_kernel,app:krr} for a detailed review of the KRR estimator and related topics on Sobolev spaces.

\textbf{Learning with invariances.} We assume that a finite group \(G\) acts smoothly and isometrically\footnote{The assumption of isometric action is made for simplicity; the proof can be extended to non-isometric actions using standard techniques in the literature, as discussed in \cite{tahmasebi2023exact}. 
} on the manifold \(\mathcal{M}\), represented by a smooth function \(\theta: G \times \mathcal{M} \to \mathcal{M}\) mapping the product manifold \(G \times \mathcal{M}\) to \(\mathcal{M}\). We employ the notation \(\theta(g, x)\) as \(gx\) for any \(g \in G\) and \(x \in \mathcal{M}\). In a scenario of learning under invariances, the regression function \(f^\star\) is invariant under the action of the group \(G\), satisfying \(f^\star(gx) = f^\star(x)\) for each \(g \in G\) and \(x \in \mathcal{M}\). Thus, learning under invariances introduces an additional requirement: not only must we compute an estimator with minimal population risk efficiently, but \(\widehat{f}\) must also be invariant with respect to \(G\). This additional condition is often satisfied in neural network applications by constructing networks that are invariant \emph{by design}, such as graph neural networks.

In the context of learning with Sobolev kernels, the KRR estimator \(\widehat{f}_{\operatorname{KRR}}\) is \emph{not} \(G\)-invariant (see \cref{app:back} for more details). Consequently, the KRR estimator cannot provide a solution for learning under invariances. However, with a shift-invariant Positive Definite Symmetric (PDS) kernel\footnote{A kernel \(K: \mathcal{M} \times \mathcal{M} \to \mathbb{R}\) is termed shift-invariant with respect to group \(G\) if and only if \(K(gx_1, gx_2) = K(x_1, x_2)\) for each \(g \in G\) and \(x_1, x_2 \in \mathcal{M}\). Shift-invariant kernels are \emph{not} necessarily \(G\)-invariant. For example, one can show that \(H^s(\mathcal{M})\) adopts a shift-invariant kernel while still producing non-invariant functions in its RKHS. We cover the details in \cref{app:krr}.} \(K: \mathcal{M} \times \mathcal{M} \to \mathbb{R}\), one can utilize \emph{group averaging} to derive a new kernel and a new RKHS holding only \(G\)-invariant functions:
\[
    K_{\operatorname{inv}}\big(x_1, x_2\big) \coloneqq \frac{1}{|G|}\sum_{g \in G} K\big(gx_1, x_2\big).
\]
Given that the Sobolev space \(H^s(\mathcal{M})\) adopts a shift-invariant PDS kernel (\cref{app:sob_kernel}), one can apply the above method to construct and compute a \(G\)-invariant kernel (assuming access to evaluating its original kernel). This indicates that the KRR estimator on \(K_{\operatorname{inv}}\) yields an invariant estimator for \(f^\star\) with a desirable population risk (see \cite{tahmasebi2023exact} for a comprehensive study).

However, in terms of computational complexity, this method requires \(\Omega(n^2 |G|)\) time to compute the new kernel between pairs of input data. In many practical scenarios, \(|G|\) can be intolerably large. For instance, for the permutation group \(P_d\), we have \(|G| = d! \sim \sqrt{d}(\frac{d}{e})^d \) which is super-exponential in $d$. Consequently, the group averaging method cannot provide an efficient algorithm for learning with exact invariances. We emphasize "exact invariance" here, as the sum involved in \(K_{\operatorname{inv}}\) can be approximated by summing over a number of random group transformations. However, this does not guarantee exact invariance, which is the primary goal of this paper.

Other traditional approaches for achieving learning under invariances include data augmentation, canonicalization, and frame averaging. For data augmentation, we need to increase our dataset size by a multiplicative factor of \(|G|\), which is often impractical within efficient time constraints. This is because, for any datapoint \( x_i \in \mathcal{S} \), data augmentation requires adding a new datapoint \( g x_i \) for each group element \( g \in G \) to ensure invariance of the underlying learning procedure in a black-box manner, leading to a complexity of \( \Omega(n |G|) \). 
Canonicalization involves mapping data onto the quotient space of the group action and subsequently finding an estimator (e.g., a KRR estimator) on the reduced input space. However, this method is also infeasible for kernels due to the unavoidable discontinuities and non-smoothness of the canonicalization maps, which violate RKHS requirements \citep{dym2024equivariant}.  
Finally, frame averaging is analogous to canonicalization, but it remains unclear how to address continuity issues for efficient frame sizes. Moreover, it requires careful design of frames tailored to the specific problem at hand, making it unsuitable for a general-purpose algorithm. Thus, motivated by these observations, we pose the following question:

\begin{tcolorbox}
Is it possible to obtain a \( G \)-invariant estimator for \( f^\star \in H^s(\mathcal{M}) \) with a desirable population risk (similar to the case without invariances) in $\operatorname{poly}(n,d,\log(|G|))$ time?
\end{tcolorbox}

In the next section, we answer this question affirmatively. This is surprising, as it suggests that enumerating the set 
$G$ is not required to find statistically efficient 
$G$-invariant estimators.

\textbf{Oracles.} To characterize computational complexity, first we need to specify the type of oracle access provided for the estimation. Before doing so, we briefly review the spectral theory of the Laplace-Beltrami operator on manifolds. For further details, we refer the reader to \cref{app:back}.

The Laplace-Beltrami operator generalizes the Laplacian operator to Riemannian manifolds. It has a basis of smooth eigenfunctions $\phi_{\lambda,\ell} \in L^2(\mathcal{M})$, which serve as an orthonormal basis for $L^2(\mathcal{M})$. The index $\lambda$ represents the eigenvalue corresponding to the eigenfunction $\phi_{\lambda,\ell}$, and $\ell \in [m_{\lambda}]$ runs over the multiplicity of $\lambda$, denoted by $m_{\lambda}$. The eigenvalues can be ordered such that $0=\lambda_0 < \lambda_1 \leq \lambda_2 \leq \dots \to \infty$. For example, in the case of the sphere $S^{d-1}$, the spherical harmonics, which are homogeneous harmonic polynomials, are a natural choice of eigenfunctions.

The sequence of eigenfunctions and their corresponding eigenvalues provide critical information about the geometry of the manifold. In this work, we make use of the following two types of oracles:
\begin{itemize}
\setlength{\itemindent}{-5pt}
\setlength\itemsep{-0.1em}
    \item The ability to evaluate any eigenfunction $\phi_{\lambda,\ell}(x)$ at a given point $x \in \mathcal{M}$.
    \item The ability to compute the $L^2(\mathcal{M})$ inner product between a shifted eigenfunction $\phi_{\lambda,\ell}(gx)$ and another eigenfunction $\phi_{\lambda,\ell'}(x)$ for any group element $g \in G$.
\end{itemize}

For both oracles, we assume free access as long as $D_{\lambda}\coloneqq\sum_{\lambda' \leq \lambda} m_{\lambda'} = \operatorname{poly}(n,d)$,  where \( D_\lambda \) denotes the number of eigenfunctions with eigenvalues less than or equal to \( \lambda \). 
This assumption is motivated by the case of $S^{d-1}$, where spherical harmonics can be efficiently evaluated or multiplied in low dimensions (in such cases, only a few monomials need to be processed, making the task simple\footnote{This extends to the Stiefel manifold and tori.}). 
The first oracle handles the geometric structure of the manifold, while the second oracle captures the relationship between the group action and the manifold’s spectrum.

\section{Main Result}

In this section, we address the question raised in the previous section by presenting the primary result of the paper, which is encapsulated in the following theorem.

\begin{restatable}[Learning with exact invariances in polynomial time]{thm}{main}
\label{thrm:main}
Consider the problem of learning with invariances with respect to a finite group \( G \) using a labeled dataset of size \( n \) sampled from a manifold of dimension \( d \). Assume that the optimal regression function belongs to the Sobolev space of functions of order \( s \), i.e., \( f^\star \in H^s(\mathcal{M}) \) for some \( s > d/2 \) and let $\alpha\coloneqq 2s/d$. Then, there exists an algorithm that, given the data, produces an exactly invariant estimator \( \widehat{f} \) such that:
 \vspace{-0.15in}
    \begin{itemize}
    \setlength{\itemindent}{-5pt}
\setlength\itemsep{-0.1em}
        \item It runs in time \( \mathcal{O}\big(\log^3(|G|) n^{3/(1+\alpha)} + n^{(2+\alpha)/(1+\alpha)}\big) \);
        \item It achieves an excess population risk (or generalization error) of \( \mathcal{R}(\widehat{f})=\mathcal{O}\big(n^{-s/(s+d/2)}\big) \);
        \item It requires \( \mathcal{O}\big( \log(|G|) n^{2/(1+\alpha)}  + n^{(2+\alpha)/(1+\alpha)}\big) \) oracle calls to construct the estimator;
        \item For any \( x \in \mathcal{M} \), the estimator \( \widehat{f}(x) \) can be computed in time \( \mathcal{O}\big( n^{1/(1+\alpha)}\big) \) using \( \mathcal{O}\big( n^{1/(1+\alpha)} \big) \) oracle calls.
    \end{itemize} 
\end{restatable}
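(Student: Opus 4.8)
The plan is to turn invariant kernel ridge regression (KRR) into a \emph{finite-dimensional, linearly constrained convex quadratic program} by passing to the eigenbasis of the Laplace--Beltrami operator, and then to control the error incurred by truncating to finitely many eigenspaces. Step 1 (spectral reformulation): since $G$ acts isometrically, the Laplace--Beltrami operator commutes with the action, so each eigenspace $V_\lambda=\mathrm{span}\{\phi_{\lambda,\ell}:\ell\in[m_\lambda]\}$ is $G$-invariant and $G$ acts on it by orthogonal matrices; writing $\Phi_\lambda(x)=(\phi_{\lambda,1}(x),\dots,\phi_{\lambda,m_\lambda}(x))^{\!\top}$, there are orthogonal $\rho_\lambda(g)\in\mathbb{R}^{m_\lambda\times m_\lambda}$ with $\Phi_\lambda(gx)=\rho_\lambda(g)\Phi_\lambda(x)$. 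Expanding $f=\sum_{\lambda,\ell}a_{\lambda,\ell}\phi_{\lambda,\ell}$, the Sobolev norm is $\|f\|_{H^s}^2\asymp\sum_\lambda(1+\lambda)^s\|a_\lambda\|^2$, the data-fitting term is quadratic in the $a_\lambda$, and the requirement $f(gx)=f(x)$ for all $g,x$ is \emph{equivalent}, by linear independence of the $\phi_{\lambda,\ell}$, to the linear constraints $a_\lambda\in\mathrm{Fix}(\rho_\lambda):=\{v:\rho_\lambda(g)v=v\ \forall g\in G\}$ for every $\lambda$. Thus invariant KRR is the minimization of a convex quadratic subject to block-decoupled linear constraints, realizing the ``infinitely many linearly constrained convex QPs'' reformulation. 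Moreover any $f$ with $a_\lambda\in\mathrm{Fix}(\rho_\lambda)$ is \emph{exactly} $G$-invariant, since $f(gx)=\sum_\lambda a_\lambda^{\!\top}\rho_\lambda(g)\Phi_\lambda(x)=\sum_\lambda(\rho_\lambda(g)^{\!\top}a_\lambda)^{\!\top}\Phi_\lambda(x)=\sum_\lambda a_\lambda^{\!\top}\Phi_\lambda(x)=f(x)$.

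Step 2 (truncation and parameter choice): keep only the eigenspaces with $\lambda\le\lambda_D$ where $D:=D_{\lambda_D}=\Theta(n^{1/(1+\alpha)})$ (which is $\mathrm{poly}(n)$, so within the free-oracle regime), set $a_\lambda=0$ for $\lambda>\lambda_D$, and let $\widehat f$ solve the resulting $D$-dimensional linearly constrained convex QP with ridge parameter $\lambda_{\mathrm{reg}}=\Theta(n^{-\alpha/(1+\alpha)})$. By construction $\widehat f\in\bigoplus_{\lambda\le\lambda_D}\mathrm{Fix}(\rho_\lambda)$, hence exactly $G$-invariant. For the risk, write $\|\widehat f-f^\star\|_{L^2}^2\lesssim\|\widehat f-P_Df^\star\|_{L^2}^2+\|P_Df^\star-f^\star\|_{L^2}^2$, where $P_D$ projects onto the first $D$ eigenfunctions. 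Since $f^\star$ is $G$-invariant its coefficients already lie in $\mathrm{Fix}(\rho_\lambda)$, so the second term equals $\sum_{\lambda>\lambda_D}\|a_\lambda^\star\|^2\le(1+\lambda_D)^{-s}\|f^\star\|_{H^s}^2\lesssim D^{-\alpha}\asymp n^{-s/(s+d/2)}$, using Weyl's law $\lambda_D\asymp D^{2/d}$ and $s/(s+d/2)=\alpha/(1+\alpha)$. The first term is bounded by the usual bias--variance estimate for ridge regression in the truncated invariant RKHS --- whose kernel eigenvalues decay at least as fast as those of $H^s(\mathcal M)$ and which contains $f^\star$'s truncation with norm $\le\|f^\star\|_{H^s}$ --- so invoking the KRR excess-risk bound recalled in the problem statement with the above $\lambda_{\mathrm{reg}}$ gives $O(n^{-s/(s+d/2)})$ as well.

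Step 3 (efficient construction and accounting): the only expensive-looking object is $\mathrm{Fix}(\rho_\lambda)$, whose naive description via $P_\lambda=\tfrac1{|G|}\sum_{g\in G}\rho_\lambda(g)$ costs $\Omega(|G|)$. Instead, fix a generating set $S\subseteq G$ with $|S|=O(\log|G|)$ --- every finite group has one, since each new generator at least doubles the subgroup generated --- and use the \emph{exact} identity $\mathrm{Fix}(\rho_\lambda)=\bigcap_{g\in S}\ker(\rho_\lambda(g)-I)$ (being fixed by all generators implies being fixed by all of $G$), which preserves exactness of the invariance. The matrices $\rho_\lambda(g)$ for $g\in S$ are read off from $O(|S|\sum_\lambda m_\lambda^2)=O(\log|G|\cdot n^{2/(1+\alpha)})$ calls to the inner-product oracle; the values $\phi_{\lambda,\ell}(x_i)$ from $O(nD)=O(n^{(2+\alpha)/(1+\alpha)})$ calls to the evaluation oracle; the nullspace intersections and the truncated QP are solved by standard linear algebra in $O(\mathrm{poly}(\log|G|)\,D^3+nD)$ time; and a single point evaluation $\widehat f(x)=\sum_{\lambda\le\lambda_D}\widehat a_\lambda^{\!\top}\Phi_\lambda(x)$ needs the $D$ retained eigenfunctions at $x$, i.e.\ $O(n^{1/(1+\alpha)})$ oracle calls and time. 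Collecting these yields the stated complexities.

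The main obstacle I expect is Step 2: making precise that solving the \emph{finite} truncated program loses nothing statistically relative to full invariant KRR. This needs (i) the tail bound showing $f^\star$'s energy beyond the first $D$ eigenfunctions is $O(n^{-s/(s+d/2)})$ --- which uses Weyl's law together with $f^\star\in H^s$ --- and (ii) the argument that the discarded high-frequency directions would anyway be suppressed by the regularizer, so the truncated and untruncated estimators have comparable excess risk; this amounts to simultaneously balancing $D$, $\lambda_{\mathrm{reg}}$, and the noise level $\sigma^2$ against the target rate, and to verifying the effective-dimension bound for the truncated invariant kernel. A secondary point is ensuring the generating-set reduction in Step 3 is genuinely exact, so that the output is exactly (not approximately) $G$-invariant, and that the linear-algebra routines meet the advertised exponents; the remaining statistical content is essentially the cited KRR guarantee applied to a sub-RKHS.
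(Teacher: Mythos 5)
Your spectral reformulation (commuting of the Laplace--Beltrami operator with isometric actions, orthogonal matrices $\rho_\lambda(g)$ on each eigenspace, reduction of the invariance constraints to a generating set of size $O(\log|G|)$, and the $D^{-\alpha}$ truncation-bias bound with $D=\Theta(n^{1/(1+\alpha)})$) is exactly the paper's skeleton. Where you diverge is the estimator inside the truncated space: you solve a constrained, \emph{ridge-regularized empirical risk} problem (a truncated invariant KRR), whereas the paper uses the plug-in coefficient estimator $\widetilde f_{\lambda,\ell}=\frac1n\sum_i y_i\phi_{\lambda,\ell}(x_i)$ and then projects each coefficient block $\widetilde f_\lambda$ onto $\mathrm{Fix}(\rho_\lambda)$ in closed form. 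This difference is not cosmetic, and it opens two gaps relative to the theorem as stated.

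First, the runtime. Your data-fit term couples all $D$ retained coefficients through the design matrix $(\phi_{\lambda,\ell}(x_i))_{i\in[n]}$, so solving your QP by the linear algebra you invoke requires forming normal equations at cost $\Omega(nD^2)=\Omega\big(n^{(3+\alpha)/(1+\alpha)}\big)$ (or an iterative scheme whose iteration count you do not bound), which exceeds the claimed $O\big(n^{(2+\alpha)/(1+\alpha)}\big)$ term; your stated cost ``$O(\mathrm{poly}(\log|G|)D^3+nD)$'' for this step is not justified. The paper sidesteps this entirely: the only $n$-dependent computation is the $O(nD)$ averaging for the $\widetilde f_{\lambda,\ell}$, and the invariance is then imposed per eigenspace by the closed-form projection $\widehat f_\lambda=\widetilde f_\lambda-{B^\lambda}^\top(B^\lambda{B^\lambda}^\top)^\dagger B^\lambda\widetilde f_\lambda$ at cost $O(\log^3(|G|)m_\lambda^3)$, yielding the advertised exponents. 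Second, the statistical step: you invoke ``the KRR excess-risk bound'' for the truncated invariant sub-RKHS as a black box, but that bound was stated for the full Sobolev RKHS with exact kernel access, and transferring it requires an effective-dimension/empirical-covariance argument you only gesture at. The paper's argument is much more elementary and self-contained: orthonormality of the eigenfunctions gives $\mathbb{E}|\widetilde f_{\lambda,\ell}-f^\star_{\lambda,\ell}|^2\le(\sigma^2+\|f^\star\|_{L^\infty}^2)/n$, summing to $D/n$, and the projection onto invariant functions is non-expansive and fixes $f^\star_{\le D}$ (since $f^\star$ is invariant), so the projection can only decrease the error; balancing $D/n$ against $D^{-\alpha}$ gives the rate. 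If you replace your ridge ERM step with this estimate-then-project scheme (or alternatively supply a conditioning/iteration analysis and a sub-RKHS KRR bound), your outline closes to a full proof of the stated complexities and risk.
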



The full proof of Theorem \ref{thrm:main} is presented in \cref{app:main}, while a detailed proof sketch is provided in \cref{sec:ps}, and the algorithm is outlined in \cref{alg:alg}.

Let us interpret the above theorem. Note that without any invariances, the Kernel Ridge Regression (KRR) estimator (details are given in \cref{app:krr}) provides an estimator $\widehat{f}_{\operatorname{KRR}}$ for the Sobolev space $H^s(\mathcal{M})$ that is computed in time $\mathcal{O}\big(n^3\big)$ and achieves the risk $\mathcal{R}(\widehat{f}_{\operatorname{KRR}})=\mathcal{O}\big(n^{-s/(s+d/2)}\big)$, which is minimax optimal. Here, while KRR cannot guarantee an exactly invariant estimator, we propose another estimator which is both exactly invariant and also converges with the same rate $\mathcal{O}\big(n^{-s/(s+d/2)}\big)$. As a result, we achieve exact invariances with statistically desirable risk (or sample complexity). In other words, the population risk is the same as the optimal case without invariances, which shows that the algorithm introduces no loss in statistical performance while enforcing group invariances.  

We thus come to the following conclusion:

\begin{tcolorbox}
The problem of learning with exact invariances can be efficiently solved in time $\operatorname{poly}(n,d, \log(|G|))$ and with excess population risk (or generalization error) $\mathcal{O}\big(n^{-s/(s+d/2)}\big)$  which is the same statistical performance as for learning without invariances. 

\end{tcolorbox}

It is worth mentioning that, according to the theorem, the proposed estimator $\widehat{f}$ is not only efficiently achievable but also efficiently computes new predictions on unlabeled data.

\begin{remark}
    We notice that in the proof of \cref{thrm:main}, the actual  time and sample complexity depends only on the size of the minimum generating set of the group $G$, denoted by $\rho(G)$, instead of $\log(|G|)$. We use the logarithm in the theorem just to make the improvement from the naive approach clearer. The actual proof allows to achieve the tighter result with $\rho(G)\le \log(|G|)$, which holds for any finite group (see \cref{prop:minial} in \cref{app:minimal}). Note that for some cases (such as cyclic groups) $\rho(G) \ll \log(|G|)$.
\end{remark}

\section{Algorithm and Proof Sketch}\label{sec:ps}

In this section, we provide a proof sketch for \cref{thrm:main}, introducing several new notations and concepts necessary for achieving the reduction in time complexity.

We begin with the most natural optimization program for obtaining an estimator: the Empirical Risk Minimization (ERM), which proposes the following estimator:
\[
    \widehat{f}_{\operatorname{ERM}} \coloneqq \argmin_{f \in H^s(\mathcal{M})} \left\{ \frac{1}{n} \sum_{i=1}^n \left( f(x_i) - y_i \right)^2 \right\},
\]
where \(\mathcal{S} = \{(x_i, y_i) : i \in [n]\} \subseteq (\mathcal{M} \times \mathbb{R})^n\) denotes the sampled (labeled) dataset.

However, as discussed, this method does not necessarily produce an estimator that is exactly invariant. A natural idea is to introduce group invariances as constraints into the above optimization, leading to the following constrained ERM solution:
\[
    \widehat{f}_{\operatorname{ERM-C}} \coloneqq \argmin_{f \in H^s(\mathcal{M})} & \left\{ \frac{1}{n} \sum_{i=1}^n \left( f(x_i) - y_i \right)^2 \right\} \\
    \textrm{s.t.} \quad & \forall (g,x) \in G \times \mathcal{M}: f(gx) = f(x).
\]
While this formulation ensures exact invariance, it introduces \(|G|\) functional equations. This is problematic for two reasons: first, \(|G|\) constraints are prohibitively many, and second, these constraints require solving functional equalities, which are not easily achievable. Moreover, the functional equations involve non-linear (pointwise) constraints on the estimator function, which at first glance appear intractable due to nonconvexity of the contraints $f(g x) = f(x)$ for general choice of $g$.

Therefore, it is necessary to reformulate the above optimization program. The goals of the reformulation are to reduce the number of constraints and encode the functional equations into more tractable constraints, ideally linear ones.

\textbf{Reducing the number of constraints.} We begin by using the following basic property (based on the group law):
\[
    \Big( \forall g \in \{g_1, g_2\},& \forall x \in \mathcal{M}: f(gx) = f(x) \Big) \\&\implies \Big( \forall x \in \mathcal{M}: f(g_1g_2x) = f(x) \Big).
\]
This observation allows us to eliminate many unnecessary constraints. Specifically, we only need constraints over a small subset of \(G\) if this subset can generate any group element through arbitrary group multiplications. To formalize this, we introduce the following definition:

\begin{definition}
    A finite group \(G\) is said to be generated by a subset \(S \subseteq G\) if for every \(g \in G\), there exists a sequence of elements \(s_1, s_2, \ldots, s_k\) such that for each $i \in [k]$, either $s_i \in S$ or $s_i^{-1}\in S$ and  \(g = s_1 s_2 \cdots s_k\). The minimum size of such a subset \(S\) is denoted by \(\rho(G)\).
\end{definition}

Clearly, \(\rho(G) \leq |G|\). However, it can be shown (see \cref{app:minimal}) that \(\rho(G) \leq \log(|G|)\), which represents an exponential improvement over the trivial upper bound.

Thus, we can reformulate the constrained ERM as:
\[
    \widehat{f}_{\operatorname{ERM-C}} \coloneqq \argmin_{f \in H^s(\mathcal{M})} & \left\{ \frac{1}{n} \sum_{i=1}^n \left( f(x_i) - y_i \right)^2 \right\} \\
    \textrm{s.t.} \quad & \forall (g,x) \in S \times \mathcal{M}: f(gx) = f(x),
\]
where \(|S| \leq \log(|G|)\). This way we reduce the number of constraints from \(|G|\) to \(\log(|G|)\) by leveraging the concept of minimal group generators. Note that this fact cannot be directly used in data augmentation, group averaging, or canonicalization techniques.

\textbf{Optimization in the spectral domain.} The constrained ERM formulation presented above, while advantageous in terms of reducing the number of constraints, involves optimizing over the infinite-dimensional space \(H^s(\mathcal{M})\), which is computationally intractable. One way to make this problem tractable is to parametrize the estimator and search for the optimal parameters. To achieve this, we utilize the spectral theory of the Laplace-Beltrami operator over manifolds. While a detailed discussion of spectral theory is provided in \cref{app:back}, we summarize the relevant concepts here.

As mentioned earlier, the Laplace-Beltrami operator yields a sequence of orthonormal eigenfunctions \(\phi_{\lambda,\ell} \in L^2(\mathcal{M})\), where \(\lambda \in \{\lambda_0, \lambda_1, \ldots\} \subseteq [0, \infty)\) represents the eigenvalue corresponding to the eigenfunction \(\phi_{\lambda,\ell}\), and \(\ell \in [m_{\lambda}]\) indexes the multiplicity of \(\lambda\), denoted by \(m_{\lambda}\). Therefore, any estimator \(f \in L^2(\mathcal{M})\) can be expressed as:
\[
    f(x) = \sum_{\lambda} \sum_{\ell=1}^{m_{\lambda}} f_{\lambda,\ell} \phi_{\lambda,\ell}(x), \quad f_{\lambda,\ell} \coloneqq \langle f, \phi_{\lambda, \ell} \rangle_{L^2(\mathcal{M})}.
\]
The idea is to parametrize the problem by finding the best coefficients \(f_{\lambda,\ell}\). However, since there are infinitely many eigenvalues, there are infinitely many parameters to estimate, which is not feasible in finite time. Fortunately, we know that \(f^\star \in H^s(\mathcal{M})\). From the definition of Sobolev spaces (see \cref{app:sob_kernel}), we have:
\[
    \|f^\star\|^2_{H^s(\mathcal{M})} \coloneqq \sum_{\lambda} \sum_{\ell=1}^{m_{\lambda}} (f^\star_{\lambda,\ell})^2 D_{\lambda}^{\alpha},
\]
where \(D_{\lambda} = \sum_{\lambda' \leq \lambda} m_{\lambda'}\), and \(\alpha \coloneqq \frac{2s}{d} > 1\).

Thus, we conclude that:
\[
    \sum_{\lambda: D_\lambda > D} \sum_{\ell=1}^{m_{\lambda}} (f^\star_{\lambda,\ell})^2 \leq D^{-\alpha} \|f^\star\|^2_{H^s(\mathcal{M})} = \mathcal{O}(D^{-\alpha}),
\]
for any \(D > 0\). This shows that for Sobolev regression functions \(f^\star \in H^s(\mathcal{M})\), we can truncate the estimation of coefficients at a certain cutoff frequency \(\lambda\), which allows the problem to be parametrized with finitely many parameters. Although this introduces bias into the estimation (since higher-frequency eigenfunctions will not be captured), the bias is bounded by the above inequality for Sobolev spaces.

Interestingly, this spectral approach yields a more meaningful optimization problem when considering the population risk function rather than ERM. The population risk, which is the primary objective in regression, is given by:
\[
    \mathcal{R}(f) = \mathbb{E}_{\mathcal{S}}\left[\|f - f^\star\|^2_{L^2(\mathcal{M})}\right] = \sum_{\lambda} \sum_{\ell=1}^{m_{\lambda}} \mathbb{E}[(f_{\lambda,\ell} - f^\star_{\lambda,\ell})^2].
\]

\textbf{Constrained spectral method.} To review, we introduced an efficient way to impose the constraints related to group invariances in the ERM objective and later presented spectral methods for obtaining estimators. The last step here is to combine these to achieve exact invariances via a constrained spectral method. We use an important property of the Laplace-Beltrami operator to introduce the algorithm.

Let $\Delta_{\mathcal{M}}$ denote the Laplace-Beltrami operator on the manifold $\mathcal{M}$, and let $G$ be a group acting isometrically on $\mathcal{M}$. Define the linear operator $T_g : f(x) \mapsto f(gx)$ for each group element $g \in G$ and any smooth function $f$ on the manifold. Then, we have
\[
    \Delta_{\mathcal{M}}(T_g \phi) = T_g (\Delta_{\mathcal{M}}(\phi)),
\]
for any smooth function $\phi$ on the manifold (for a formal proof, please refer to \cref{app:commute}).

This identity tells us that the Laplace-Beltrami operator $\Delta_{\mathcal{M}}$ commutes with the operator $T_g$ for each $g$. Since both operators are linear, spectral theory implies that the commutativity shows the eigenspaces of $\Delta_{\mathcal{M}}$ are \emph{preserved} under the action of the group $G$, meaning the operators can be simultaneously diagonalized. Specifically, for any $\lambda,\ell$, and any $g \in G$, the function $\phi_{\lambda,\ell}(gx)$ is a linear combination of eigenfunctions $\phi_{\lambda,\ell'}$, $\ell' \in [m_\lambda]$. In particular, the group $G$ acts via orthogonal matrices on the eigenspace $V_\lambda \coloneqq \operatorname{span}(\phi_{\lambda,\ell} : \ell \in [m_\lambda])$ for each $\lambda$.

Let $D^{\lambda}(g)$ denote the $m_\lambda \times m_\lambda$ orthogonal matrix corresponding to the action of an element $g \in G$ on $V_\lambda$ for each $\lambda$. Then, a function 
\[
f(x) = \sum_{\lambda} \sum_{\ell=1}^{m_\lambda} f_{\lambda,\ell} \phi_{\lambda,\ell}(x)
\]
is $G$-invariant if and only if
\[
D^\lambda(g) f_{\lambda} = f_{\lambda}, \quad \forall g \in G ~\forall \lambda \in \{\lambda_0, \lambda_1, \ldots\},
\]
where $f_\lambda \coloneqq (f_{\lambda,\ell})_{\ell \in [m_\lambda]} \in \mathbb{R}^{m_\lambda}$ for each $\lambda$. We can further reduce the number of conditions by passing $G$ to a generator set, which gives only $\log(|G|)$ conditions.

Thus, the commutativity of the Laplace-Beltrami operator and any isometric group action allows us to introduce only linear constraints on the spectral method to achieve exact invariances. This leads to the following optimization program:
\[
\min_{f_{\lambda,\ell}}~ &\sum_{\lambda} \sum_{\ell=1}^{m_\lambda} \mathbb{E}[(f_{\lambda,\ell} - f^\star_{\lambda,\ell})^2], \\
\text{s.t.} &\quad \forall g \in S ~ \forall \lambda \in \{\lambda_0, \lambda_1, \ldots\}: D^\lambda(g) f_\lambda = f_\lambda.
\]
Here, $f^\star_{\lambda,\ell} = \mathbb{E}_{x}[f^\star(x)\phi_{\lambda,\ell}(x)] = \mathbb{E}_{x,y}[y\phi_{\lambda,\ell}(x)]$ is not known a priori; only $n$ samples $(x_i, y_i) \in \mathcal{M} \times \mathbb{R}$, $i \in [n]$, are given. Furthermore, the constraints are independent for different eigenspaces (i.e., different $\lambda$), and the objective is a sum over eigenspaces. This means we can decompose the problem into a set of linearly constrained optimization programs, one for each eigenspace $V_\lambda$:
\[
\min_{f_{\lambda,\ell}}~ &\sum_{\ell=1}^{m_\lambda} \mathbb{E}[(f_{\lambda,\ell} - f^\star_{\lambda,\ell})^2], \\
\text{s.t.} &\quad \forall g \in S: D^\lambda(g) f_\lambda = f_\lambda.
\]

This reformulation allows us to propose efficient estimators for the problem.

\textbf{Empirical estimator.} In this paper, we suggest the following auxiliary empirical mean estimator from the data for the above optimization program on $V_\lambda$:
\[
\widetilde{f}_{\lambda,\ell} = \frac{1}{n} \sum_{i=1}^n y_i \phi_{\lambda,\ell}(x_i), \quad \forall \ell \in [m_\lambda]. \numberthis{eq:emp}
\]
Moreover, we stop estimation and set $\widetilde{f}_{\lambda,\ell} = 0$ when $D_\lambda > D$, where $D$ is a hyperparameter. To obtain a $G$-invariant estimator from our primary estimator, we solve the following quadratic program to find a solution satisfying the constraints for each $V_\lambda$ with $D_\lambda \leq D$:
\[
\widehat{f}_{\lambda, \ell} \coloneqq \argmin\limits_{f_{\lambda,\ell}} &~ \sum_{\ell=1}^{m_\lambda} (f_{\lambda,\ell} - \widetilde{f}_{\lambda,\ell})^2, \\
\text{s.t.} &\quad \forall g \in S: D^\lambda(g) f_\lambda = f_\lambda.
\]

This optimization problem is a convex quadratic program with linear constraints that can be solved iteratively using the rich convex optimization machinery. Additionally, it  has a closed-form solution as noted in \Cref{prop:projection} in \Cref{app:project}. Let $B^\lambda \in \mathbb{R}^{|S| m_\lambda \times m_\lambda}$ be defined as the augmented matrix resulting from concatenating $D^\lambda(g) - I$ for all $g \in S$, 
i.e., $B^\lambda = [(D^\lambda(g_1) - I)^\top, (D^\lambda(g_2) - I)^\top, \dots, (D^\lambda(g_{|S|}) - I)^\top]^\top$. Then,
\[
\widehat{f}_{\lambda,\ell} = \widetilde{f}_{\lambda,\ell} - {B^\lambda}^\top (B^\lambda {B^\lambda}^\top)^\dagger (B^\lambda \widetilde{f}_{\lambda})[\ell],
\]
where $\dagger$ denotes Moore–Penrose inverse.

The final estimator of the algorithm is given by
\[
    \widehat{f}(x) = \sum_{\lambda : D_\lambda \leq D} \sum_{\ell=1}^{m_\lambda} \widehat{f}_{\lambda,\ell} \phi_{\lambda,\ell}(x).
\]
This meta approach to design a $G$-invariant estimator $\widehat{f}$ from any primary estimtor $\widetilde{f}$ is novel and may be of independent interest. Pseudocode for the method is presented in \cref{alg:alg}. Since the invariance is imposed in the spectral representation, we refer to our proposed algorithm as Spectral Averaging (\texttt{Spec-Avg}).

\begin{algorithm}[h]
\caption{Learning with Exact Invariances by Spectral Averaging (\texttt{Spec-Avg})}
\begin{algorithmic}[1]\label{alg:alg}
\REQUIRE  $\mathcal{S} = \{(x_i, y_i) : i \in [n]\}$ and $\alpha = 2s/d \in (1,\infty)$.
\ENSURE  $\widehat{f}(x)$.
\STATE Initialize $D \gets n^{1/(1+\alpha)}$.
\FOR{each $\lambda$ such that $D_\lambda \leq D$}
    \FOR{each $\ell \in [m_\lambda]$}
        \STATE $\widetilde{f}_{\lambda,\ell} \gets \frac{1}{n} \sum_{i=1}^n y_i \phi_{\lambda,\ell}(x_i)$.
    \ENDFOR
\ENDFOR
\FOR{each $\lambda$ such that $D_\lambda \leq D$}
\STATE Solve the following linearly constrained quadratic program over $m_\lambda$ variables:
\[
    \widehat{f}_{\lambda, \ell} \gets \argmin_{f_{\lambda,\ell}} &~ \sum_{\ell=1}^{m_\lambda} (f_{\lambda,\ell} - \widetilde{f}_{\lambda,\ell})^2, \\
    \text{s.t.} &\quad \forall g \in S: D^\lambda(g) f_\lambda = f_\lambda.
\]
\ENDFOR
\STATE \textbf{Return:} $\widehat{f}(x) = \sum_{\lambda : D_\lambda \leq D} \sum_{\ell=1}^{m_\lambda} \widehat{f}_{\lambda,\ell} \phi_{\lambda,\ell}(x)$.
\end{algorithmic}
\end{algorithm}

We conclude this section by reviewing how we apply the results from \cref{alg:alg} to the two following important examples.
\begin{example}
    Consider the problem of learning under invariances over the unit sphere $S^{d-1}\coloneqq \{x \in \mathbb{R}^d : \|x\|_2 = 1\}$, where the group $G$ is the group of all permutations of coordinates. Note that $|G| = d!$, which is prohibitively large for data augmentation or group averaging. However, this group is generated by only two elements: $\sigma_1 = (1 \, 2)$ and $\sigma_2 = (1 \, 2 \, \ldots \, d)$. Here, $\sigma_1$ swaps the first and second coordinates, while $\sigma_2$ is a cycle that maps $1 \to 2$, $2 \to 3$, and so on, cyling with $d \to 1$.

    The eigenspaces $V_{\lambda}$ for the sphere are precisely the sets of homogeneous harmonic polynomials of degree $k$, where $\lambda = k(k+d-2)$. The permutation group acts on $V_{\lambda}$ by permuting the variables of the polynomials. This action is clearly linear, and the matrices $D^\lambda(g)$ can be efficiently computed (using tensor products) as long as $k$ is small. Moreover, homogeneous polynomials of degree $k$ can also be computed efficiently for small $k$. This shows that the oracles considered in this paper align perfectly with what we observe in the important case of spheres and polynomial regression. In \cref{alg:alg}, we first compute the coefficients of each polynomial for degree $k$, up to a small $k$, and then solve a quadratic program with only two linear constraints to obtain an exactly invariant polynomial solution.
\end{example}

\begin{example}
    Consider the same setup as the previous example but assume $d = 2$, i.e., the manifold is the unit circle. In this case, each eigenspace $V_{\lambda}$ is spanned by $\sin(k\theta)$ and $\cos(k\theta)$, where $\lambda = k^2$. Let us assume our task is to find an estimator invariant with respect to rotations by integer multiples of $\frac{2\pi}{|G|}$. This group is cyclic and is generated by only one element $g_0 = \frac{2\pi}{|G|}$. Thus, we have only one constraint for each eigenspace. Indeed, one can observe that $D^\lambda(g_0) = R(k\frac{2\pi}{|G|})$, where $R(.) \in \mathbb{R}^{2 \times 2}$ is the two-dimensional rotation matrix. Thus, this example further illustrates how our oracles are defined to solve the problem.
\end{example}

\section{Experiments}\label{exper}

In this section, we provide complementary experiments to support our theoretical results. We first show that, in practice, Kernel Ridge Regression (\texttt{KRR}) is not a $G$-invariant estimator. Then, we demonstrate that our algorithm (\texttt{Spec-Avg}) achieves the same rate of population risk as \texttt{KRR}, while enjoying exact invariance properties.

\subsection{Problem Statement}
We consider the input space (manifold) $\mathbb{T}^d  = [-1, 1)^d$, which represents a flat $d$-dimensional torus. Additionally, we consider the group of sign-invariances $G = \{\pm 1\}^d$, acting on this space via coordinate-wise sign inversions.
The dataset is generated as $n$ independent and identically distributed (i.i.d.) samples drawn uniformly from this space, with the target function defined as:
$
f^*(x) = \frac{1}{d}\sum_{i=1}^d i x_i^2.
$

Clearly, this function is invariant w.r.t. group action $G$.
To analyze estimation via kernels in this setup, we consider a periodic kernel on the torus $\mathbb{T}^d$, specifically the \textit{von Mises Kernel}~\citep{von1918ganzzahligkeit,mardia2009directional}, defined as:
$
K_{\eta}(x, y) = \exp\left(\eta \cos(\pi (x - y))\right),
$
where $\eta$ is a positive parameter associated with kernel bandwidth. This kernel function is particularly useful for circular and directional statistics.

Moreover, the kernel admits the following sign-invariant eigenfunctions:
$
\phi_{\ell_1, \ell_2, \dots, \ell_d}(x) = \prod_{i=1}^d \cos(\pi \ell_i x_i),
$
where $\ell_i \in \mathbb{N} \cup \{0\}$. 
The corresponding eigenvalues can be computed as  
$\lambda = \pi \sum_{i=1}^d \ell_i^2,$ 
derived from the partial differential equation  
$
\Delta \phi_{\ell_1, \ell_2, \dots, \ell_d} + \lambda \phi_{\ell_1, \ell_2, \dots, \ell_d} = 0.
$
This formulation facilitates the analysis of \texttt{KRR} and \texttt{Spec-Avg} under symmetry constraints, ensuring their compatibility with the underlying group structure. It is worth noting that, in this setting, $|G| = 2^d$. Consequently,  group averaging is computationally inefficient due to the exponential growth of the group size with the dimensionality $d$.

\subsection{Settings}
We conduct our experiments for $d = 10$. The trained models are evaluated on a test dataset of size $100$. Both the test and train datasets are generated uniformly from the interval $[-1, 1]^d$, independently and identically distributed. Each point in our plots represents an average over $10$ different random seeds (from $1$ to $10$) to account for the randomness in the data generation process. 

\subsection{Results}

The results of the experiments are depicted in \Cref{fig:exp1} and \Cref{fig:exp2} in \cref{app:exp}. While our algorithm (\texttt{Spec-Avg}) is $G$-invariant by construction, there is no theoretical guarantee for Kernel Ridge Regression (\texttt{KRR}) to be $G$-invariant. In \Cref{fig:exp1} in \cref{app:exp}, we demonstrate that this is indeed the case in practice, as the estimator \texttt{KRR} is not $G$-invariant. We define the following measure of Invariance Discrepancy:
\[
\operatorname{ID}(\widehat{f}) \defeq \sup_{x \in \mathcal{X}, g \in G} |\widehat{f}(x) - \widehat{f}(g x)|,
\]
where $\widehat{f}$ is the estimator. We report this value for \texttt{KRR} across different choices of the regularization parameter $\lambda$. It is worth noting that $\operatorname{ID}(\widehat{f})$ is zero for the \texttt{Spec-Avg} estimator, as it is $G$-invariant by design.

In \Cref{fig:exp2} in \cref{app:exp}, we present the empirical excess population risk of \texttt{KRR} and \texttt{Spec-Avg} for different hyperparameters $\lambda$ and $D$, respectively. As expected, it is demonstrated that with an appropriate choice of hyperparameters, \texttt{KRR} and \texttt{Spec-Avg} achieve the same order of test error. Higher values of the regularization parameter $\lambda$ for \texttt{KRR} correspond to lower values of the sparsity parameter $D$ for \texttt{Spec-Avg}, both of which act as mechanisms for regularizing the norm of the estimator. It can be observed that \texttt{Spec-Avg} with $D = 176$ achieves the same order of performance as \texttt{KRR} with $\lambda = 50$.

\section{Conclusion}

In this paper, we explore the statistical-computational trade-offs in learning with invariances, focusing  on kernel regression. We observe that while the Kernel Ridge Regression (KRR) estimator can address this problem, it is not invariant without group averaging, and since group averaging is  costly for large groups, we ask whether it is possible to develop statistically sound estimators with efficient time complexity. Our findings show that by reformulating the problem and reducing the number of constraints using group laws, we can express it as solving an infinite series of quadratic optimization programs under linear constraints. We conclude with an algorithm that achieves an exactly invariant estimator with polynomial time complexity and highlight several additional questions for future research.


\section*{Impact Statement}

The primary focus of this work is on theoretical problems in machine learning, specifically learning with invariances. As such, it does not have direct societal implications or ethical concerns.

\bibliography{main}

\begin{thebibliography}{56}
\providecommand{\natexlab}[1]{#1}
\providecommand{\url}[1]{\texttt{#1}}
\expandafter\ifx\csname urlstyle\endcsname\relax
  \providecommand{\doi}[1]{doi: #1}\else
  \providecommand{\doi}{doi: \begingroup \urlstyle{rm}\Url}\fi

\bibitem[Bach(2013)]{bach2013sharp}
Bach, F.
\newblock Sharp analysis of low-rank kernel matrix approximations.
\newblock In \emph{Conference on Learning Theory (COLT)}, 2013.

\bibitem[Batzner et~al.(2022)Batzner, Musaelian, Sun, Geiger, Mailoa,
  Kornbluth, Molinari, Smidt, and Kozinsky]{batzner20223}
Batzner, S., Musaelian, A., Sun, L., Geiger, M., Mailoa, J.~P., Kornbluth, M.,
  Molinari, N., Smidt, T.~E., and Kozinsky, B.
\newblock E(3)-equivariant graph neural networks for data-efficient and
  accurate interatomic potentials.
\newblock \emph{Nature communications}, 13\penalty0 (1):\penalty0 2453, 2022.

\bibitem[Batzner et~al.(2023)Batzner, Musaelian, and
  Kozinsky]{batzner2023advancing}
Batzner, S., Musaelian, A., and Kozinsky, B.
\newblock Advancing molecular simulation with equivariant interatomic
  potentials.
\newblock \emph{Nature Reviews Physics}, 5\penalty0 (8):\penalty0 437--438,
  2023.

\bibitem[Behboodi et~al.(2022)Behboodi, Cesa, and Cohen]{behboodi2022pac}
Behboodi, A., Cesa, G., and Cohen, T.~S.
\newblock A pac-bayesian generalization bound for equivariant networks.
\newblock In \emph{Advances in Neural Information Processing Systems
  (NeurIPS)}, 2022.

\bibitem[Benton et~al.(2020)Benton, Finzi, Izmailov, and
  Wilson]{benton2020learning}
Benton, G., Finzi, M., Izmailov, P., and Wilson, A.~G.
\newblock Learning invariances in neural networks from training data.
\newblock In \emph{Advances in Neural Information Processing Systems
  (NeurIPS)}, 2020.

\bibitem[Bietti et~al.(2021)Bietti, Venturi, and Bruna]{bietti2021sample}
Bietti, A., Venturi, L., and Bruna, J.
\newblock On the sample complexity of learning under geometric stability.
\newblock In \emph{Advances in Neural Information Processing Systems
  (NeurIPS)}, 2021.

\bibitem[Bilo{\v{s}} \& G{\"u}nnemann(2021)Bilo{\v{s}} and
  G{\"u}nnemann]{bilovs2021scalable}
Bilo{\v{s}}, M. and G{\"u}nnemann, S.
\newblock Scalable normalizing flows for permutation invariant densities.
\newblock In \emph{Int. Conference on Machine Learning (ICML)}, 2021.

\bibitem[Bronstein et~al.(2017)Bronstein, Bruna, LeCun, Szlam, and
  Vandergheynst]{bronstein2017geometric}
Bronstein, M.~M., Bruna, J., LeCun, Y., Szlam, A., and Vandergheynst, P.
\newblock Geometric deep learning: going beyond euclidean data.
\newblock \emph{IEEE Signal Processing Magazine}, 34\penalty0 (4):\penalty0
  18--42, 2017.

\bibitem[Canzani(2013)]{canzani2013analysis}
Canzani, Y.
\newblock Analysis on manifolds via the laplacian.
\newblock \emph{Lecture Notes available at: http://www. math. harvard.
  edu/canzani/docs/Laplacian. pdf}, pp.\  41--44, 2013.

\bibitem[Cesa-Bianchi et~al.(2015)Cesa-Bianchi, Mansour, and
  Shamir]{cesa2015complexity}
Cesa-Bianchi, N., Mansour, Y., and Shamir, O.
\newblock On the complexity of learning with kernels.
\newblock In \emph{Conference on Learning Theory (COLT)}, 2015.

\bibitem[Chavel(1984)]{chavel1984eigenvalues}
Chavel, I.
\newblock \emph{Eigenvalues in Riemannian geometry}.
\newblock Academic press, 1984.

\bibitem[Chen et~al.(2023)Chen, Katsoulakis, Rey-Bellet, and
  Zhu]{chen2023sample}
Chen, Z., Katsoulakis, M., Rey-Bellet, L., and Zhu, W.
\newblock Sample complexity of probability divergences under group symmetry.
\newblock In \emph{Int. Conference on Machine Learning (ICML)}, 2023.

\bibitem[Cortes et~al.(2010)Cortes, Mohri, and Talwalkar]{cortes2010impact}
Cortes, C., Mohri, M., and Talwalkar, A.
\newblock On the impact of kernel approximation on learning accuracy.
\newblock In \emph{Int. Conference on Artificial Intelligence and Statistics
  (AISTATS)}, 2010.

\bibitem[Drineas et~al.(2005)Drineas, Mahoney, and
  Cristianini]{drineas2005nystrom}
Drineas, P., Mahoney, M.~W., and Cristianini, N.
\newblock On the nystr{\"o}m method for approximating a gram matrix for
  improved kernel-based learning.
\newblock \emph{Journal of Machine Learning Research}, 2005.

\bibitem[Dym \& Gortler(2024)Dym and Gortler]{dym2024low}
Dym, N. and Gortler, S.~J.
\newblock Low-dimensional invariant embeddings for universal geometric
  learning.
\newblock \emph{Foundations of Computational Mathematics}, pp.\  1--41, 2024.

\bibitem[Dym et~al.(2024)Dym, Lawrence, and Siegel]{dym2024equivariant}
Dym, N., Lawrence, H., and Siegel, J.~W.
\newblock Equivariant frames and the impossibility of continuous
  canonicalization.
\newblock In \emph{Int. Conference on Machine Learning (ICML)}, 2024.

\bibitem[Elesedy(2021)]{elesedy2021provably}
Elesedy, B.
\newblock Provably strict generalisation benefit for invariance in kernel
  methods.
\newblock In \emph{Advances in Neural Information Processing Systems
  (NeurIPS)}, 2021.

\bibitem[Elesedy \& Zaidi(2021)Elesedy and Zaidi]{elesedy2021provablyinv}
Elesedy, B. and Zaidi, S.
\newblock Provably strict generalisation benefit for equivariant models.
\newblock In \emph{Int. Conference on Machine Learning (ICML)}, 2021.

\bibitem[Evans(2022)]{evans2022partial}
Evans, L.~C.
\newblock \emph{Partial differential equations}, volume~19.
\newblock American Mathematical Society, 2022.

\bibitem[Goodfellow et~al.(2009)Goodfellow, Lee, Le, Saxe, and
  Ng]{goodfellow2009measuring}
Goodfellow, I., Lee, H., Le, Q., Saxe, A., and Ng, A.
\newblock Measuring invariances in deep networks.
\newblock In \emph{Advances in Neural Information Processing Systems
  (NeurIPS)}, 2009.

\bibitem[Grisafi et~al.(2018)Grisafi, Wilkins, Cs{\'a}nyi, and
  Ceriotti]{grisafi2018symmetry}
Grisafi, A., Wilkins, D.~M., Cs{\'a}nyi, G., and Ceriotti, M.
\newblock Symmetry-adapted machine learning for tensorial properties of
  atomistic systems.
\newblock \emph{Physical review letters}, 120\penalty0 (3):\penalty0 036002,
  2018.

\bibitem[Hinton(1987)]{hinton1987learning}
Hinton, G.~E.
\newblock Learning translation invariant recognition in a massively parallel
  networks.
\newblock In \emph{International conference on parallel architectures and
  languages Europe}, pp.\  1--13. Springer, 1987.

\bibitem[H{\"o}rmander(1968)]{hormander1968spectral}
H{\"o}rmander, L.
\newblock The spectral function of an elliptic operator.
\newblock In \emph{Mathematics Past and Present Fourier Integral Operators},
  pp.\  217--242. Springer, 1968.

\bibitem[Kaba et~al.(2023)Kaba, Mondal, Zhang, Bengio, and
  Ravanbakhsh]{kaba2023equivariance}
Kaba, S.-O., Mondal, A.~K., Zhang, Y., Bengio, Y., and Ravanbakhsh, S.
\newblock Equivariance with learned canonicalization functions.
\newblock In \emph{Int. Conference on Machine Learning (ICML)}, 2023.

\bibitem[Kiani et~al.(2024)Kiani, Le, Lawrence, Jegelka, and
  Weber]{kiani2024on}
Kiani, B., Le, T., Lawrence, H., Jegelka, S., and Weber, M.
\newblock On the hardness of learning under symmetries.
\newblock In \emph{Int. Conference on Learning Representations (ICLR)}, 2024.

\bibitem[K{\"o}hler et~al.(2020)K{\"o}hler, Klein, and
  No{\'e}]{kohler2020equivariant}
K{\"o}hler, J., Klein, L., and No{\'e}, F.
\newblock Equivariant flows: exact likelihood generative learning for symmetric
  densities.
\newblock In \emph{Int. Conference on Machine Learning (ICML)}, 2020.

\bibitem[Kondor(2008)]{kondor2008group}
Kondor, I.~R.
\newblock \emph{Group theoretical methods in machine learning}.
\newblock Columbia University, 2008.

\bibitem[Krizhevsky et~al.(2012)Krizhevsky, Sutskever, and
  Hinton]{krizhevsky2012imagenet}
Krizhevsky, A., Sutskever, I., and Hinton, G.~E.
\newblock Imagenet classification with deep convolutional neural networks.
\newblock In \emph{Advances in Neural Information Processing Systems
  (NeurIPS)}, 2012.

\bibitem[Lee(2012)]{lee2012introduction}
Lee, J.
\newblock \emph{Introduction to Smooth Manifolds}, volume 218.
\newblock Springer Science \& Business Media, 2012.

\bibitem[Lee(2006)]{lee2006riemannian}
Lee, J.~M.
\newblock \emph{Riemannian manifolds: an introduction to curvature}, volume
  176.
\newblock Springer Science \& Business Media, 2006.

\bibitem[Li et~al.(2021)Li, Liu, Yang, Peng, and Zhou]{li2021survey}
Li, Z., Liu, F., Yang, W., Peng, S., and Zhou, J.
\newblock A survey of convolutional neural networks: analysis, applications,
  and prospects.
\newblock \emph{IEEE transactions on neural networks and learning systems},
  33\penalty0 (12):\penalty0 6999--7019, 2021.

\bibitem[Ma et~al.(2024)Ma, Wang, Lim, Jegelka, and Wang]{ma2024canonization}
Ma, G., Wang, Y., Lim, D., Jegelka, S., and Wang, Y.
\newblock A canonization perspective on invariant and equivariant learning.
\newblock \emph{arXiv preprint arXiv:2405.18378}, 2024.

\bibitem[Mardia \& Jupp(2009)Mardia and Jupp]{mardia2009directional}
Mardia, K.~V. and Jupp, P.~E.
\newblock \emph{Directional statistics}.
\newblock John Wiley \& Sons, 2009.

\bibitem[Mei et~al.(2021)Mei, Misiakiewicz, and Montanari]{mei2021learning}
Mei, S., Misiakiewicz, T., and Montanari, A.
\newblock Learning with invariances in random features and kernel models.
\newblock In \emph{Conference on Learning Theory (COLT)}, 2021.

\bibitem[Myers \& Steenrod(1939)Myers and Steenrod]{myers1939group}
Myers, S.~B. and Steenrod, N.~E.
\newblock The group of isometries of a riemannian manifold.
\newblock \emph{Annals of Mathematics}, 40\penalty0 (2):\penalty0 400--416,
  1939.

\bibitem[Niu et~al.(2020)Niu, Song, Song, Zhao, Grover, and
  Ermon]{niu2020permutation}
Niu, C., Song, Y., Song, J., Zhao, S., Grover, A., and Ermon, S.
\newblock Permutation invariant graph generation via score-based generative
  modeling.
\newblock In \emph{Int. Conference on Machine Learning (ICML)}, 2020.

\bibitem[Palais(1957)]{palais1957differentiability}
Palais, R.~S.
\newblock On the differentiability of isometries.
\newblock \emph{Proceedings of the American Mathematical Society}, 8\penalty0
  (4):\penalty0 805--807, 1957.

\bibitem[Petersen(2006)]{petersen2006riemannian}
Petersen, P.
\newblock Riemannian geometry.
\newblock \emph{Graduate Texts in Mathematics/Springer-Verlarg}, 2006.

\bibitem[Petrache \& Trivedi(2023)Petrache and
  Trivedi]{petrache2023approximation}
Petrache, M. and Trivedi, S.
\newblock Approximation-generalization trade-offs under (approximate) group
  equivariance.
\newblock In \emph{Advances in Neural Information Processing Systems
  (NeurIPS)}, 2023.

\bibitem[Puny et~al.(2022)Puny, Atzmon, Ben-Hamu, Misra, Grover, Smith, and
  Lipman]{puny2021frame}
Puny, O., Atzmon, M., Ben-Hamu, H., Misra, I., Grover, A., Smith, E.~J., and
  Lipman, Y.
\newblock Frame averaging for invariant and equivariant network design.
\newblock In \emph{Int. Conference on Learning Representations (ICLR)}, 2022.

\bibitem[Qi et~al.(2017{\natexlab{a}})Qi, Su, Mo, and Guibas]{qi2017pointnet}
Qi, C.~R., Su, H., Mo, K., and Guibas, L.~J.
\newblock Pointnet: Deep learning on point sets for 3d classification and
  segmentation.
\newblock In \emph{IEEE Conference on Computer Vision and Pattern Recognition
  (CVPR)}, 2017{\natexlab{a}}.

\bibitem[Qi et~al.(2017{\natexlab{b}})Qi, Yi, Su, and
  Guibas]{qi2017pointnetplus}
Qi, C.~R., Yi, L., Su, H., and Guibas, L.~J.
\newblock Pointnet++: Deep hierarchical feature learning on point sets in a
  metric space.
\newblock In \emph{Advances in Neural Information Processing Systems
  (NeurIPS)}, 2017{\natexlab{b}}.

\bibitem[Rahimi \& Recht(2007)Rahimi and Recht]{rahimi2007random}
Rahimi, A. and Recht, B.
\newblock Random features for large-scale kernel machines.
\newblock In \emph{Advances in Neural Information Processing Systems
  (NeurIPS)}, 2007.

\bibitem[Ravanbakhsh et~al.(2017)Ravanbakhsh, Schneider, and
  Poczos]{ravanbakhsh2017equivariance}
Ravanbakhsh, S., Schneider, J., and Poczos, B.
\newblock Equivariance through parameter-sharing.
\newblock In \emph{Int. Conference on Machine Learning (ICML)}, 2017.

\bibitem[Scarselli et~al.(2008)Scarselli, Gori, Tsoi, Hagenbuchner, and
  Monfardini]{scarselli2008graph}
Scarselli, F., Gori, M., Tsoi, A.~C., Hagenbuchner, M., and Monfardini, G.
\newblock The graph neural network model.
\newblock \emph{IEEE transactions on neural networks}, 20\penalty0
  (1):\penalty0 61--80, 2008.

\bibitem[Scholkopf \& Smola(2018)Scholkopf and Smola]{scholkopf2018learning}
Scholkopf, B. and Smola, A.~J.
\newblock \emph{Learning with kernels: support vector machines, regularization,
  optimization, and beyond}.
\newblock MIT press, 2018.

\bibitem[Smidt(2021)]{smidt2021euclidean}
Smidt, T.~E.
\newblock Euclidean symmetry and equivariance in machine learning.
\newblock \emph{Trends in Chemistry}, 3\penalty0 (2):\penalty0 82--85, 2021.

\bibitem[Sogge(1988)]{sogge1988concerning}
Sogge, C.~D.
\newblock Concerning the lp norm of spectral clusters for second-order elliptic
  operators on compact manifolds.
\newblock \emph{Journal of functional analysis}, 77\penalty0 (1):\penalty0
  123--138, 1988.

\bibitem[Tahmasebi \& Jegelka(2023)Tahmasebi and Jegelka]{tahmasebi2023exact}
Tahmasebi, B. and Jegelka, S.
\newblock The exact sample complexity gain from invariances for kernel
  regression.
\newblock In \emph{Advances in Neural Information Processing Systems
  (NeurIPS)}, 2023.

\bibitem[Tahmasebi \& Jegelka(2024)Tahmasebi and Jegelka]{tahmasebisample}
Tahmasebi, B. and Jegelka, S.
\newblock Sample complexity bounds for estimating probability divergences under
  invariances.
\newblock In \emph{Int. Conference on Machine Learning (ICML)}, 2024.

\bibitem[Teo et~al.(2007)Teo, Globerson, Roweis, and Smola]{teo2007convex}
Teo, C., Globerson, A., Roweis, S., and Smola, A.
\newblock Convex learning with invariances.
\newblock In \emph{Advances in Neural Information Processing Systems
  (NeurIPS)}, 2007.

\bibitem[Unke et~al.(2021)Unke, Bogojeski, Gastegger, Geiger, Smidt, and
  M{\"u}ller]{unke2021se}
Unke, O., Bogojeski, M., Gastegger, M., Geiger, M., Smidt, T., and M{\"u}ller,
  K.-R.
\newblock Se(3)-equivariant prediction of molecular wavefunctions and
  electronic densities.
\newblock In \emph{Advances in Neural Information Processing Systems
  (NeurIPS)}, 2021.

\bibitem[von Mises(1918)]{von1918ganzzahligkeit}
von Mises, R.
\newblock {\"U}ber die "ganzzahligkeit" der atomgewichten und ververwandte
  fragen.
\newblock \emph{Physical Journal}, 19:\penalty0 490, 1918.

\bibitem[Williams \& Seeger(2000)Williams and Seeger]{williams2000using}
Williams, C. and Seeger, M.
\newblock Using the nystr{\"o}m method to speed up kernel machines.
\newblock In \emph{Advances in Neural Information Processing Systems
  (NeurIPS)}, 2000.

\bibitem[Xu et~al.(2019)Xu, Hu, Leskovec, and Jegelka]{xu2018powerful}
Xu, K., Hu, W., Leskovec, J., and Jegelka, S.
\newblock How powerful are graph neural networks?
\newblock In \emph{Int. Conference on Learning Representations (ICLR)}, 2019.

\bibitem[Zhang et~al.(2013)Zhang, Duchi, and Wainwright]{zhang2013divide}
Zhang, Y., Duchi, J., and Wainwright, M.
\newblock Divide and conquer kernel ridge regression.
\newblock In \emph{Conference on Learning Theory (COLT)}, 2013.

\end{thebibliography}
\bibliographystyle{icml2025}

\newpage
\appendix
\onecolumn

\section{Discussion and Future Directions}

We initiated the study on computational-statistical trade-offs in learning with exact invariances. We designed an algorithm that shows achieving the desirable population risk (the same as kernel regression without invariances) in $\operatorname{poly}(n,d,\log(|G|))$ time for the task of kernel regression with invariances on general manifolds. We note that, for simplicity, we have focused on boundaryless manifolds and isometric group actions. However, using standard techniques, the theory can be extended to more general cases as well\footnote{See e.g., \citet{tahmasebi2023exact}.}. While the proposed spectral algorithm is computationally efficient, it does not offer any improvement in sample complexity over the baseline $\mathcal{R}(\widehat{f}) = \mathcal{O}\big(n^{-s/(s+d/2)}\big)$. It has been observed that without computational constraints, better convergence rates are possible for learning with invariances \citep{tahmasebi2023exact}, which are minimax optimal. Thus, it remains open whether those improved rates are achievable in $\operatorname{poly}(n,d,\log(|G|))$ time.

We note that the oracle access we assumed is primarily motivated by the case of the sphere, where polynomials can be evaluated, multiplied, composed by group elements, and integrated efficiently when they are of relatively low degree. We believe this is the most natural oracle access for this problem, as it aligns well with applications involving polynomials. An interesting future work could be to investigate the statistical-computational trade-offs using alternative oracles, e.g., similar to the kernel trick, how to design computationally efficient algorithms that have only access to the inner product of the RKHS. Another interesting future direction is to find whether random feature models as approximations for kernels can significantly improve the statistical-computational trade-off of learning with invariances. At present, our theory does not apply to random feature models.

We also observe that the spectral algorithm used in this paper does not employ the kernel trick, as it requires access to the entire set of features, rather than just their inner products. An interesting question is whether it is possible to utilize kernel tricks and find an alternative (polynomial-time) algorithm for learning under invariances. This approach could potentially improve the statistical efficiency of the spectral algorithm. In the end, we would like to note that capturing computational-statistical trade-offs in other estimation problems with invariances such as density estimation \citep{chen2023sample, tahmasebisample} could serve as a compelling avenue for future research.

\section{Background}\label{app:back}

\subsection{Riemannian Manifolds} \label{app:manifold}

In this section, we review some fundamental definitions from differential geometry and refer the reader to \citet{lee2006riemannian,petersen2006riemannian,lee2012introduction} for further details.

\begin{definition}[Manifold]
    A topological \emph{manifold} $\mathcal{M}$ of dimension $\operatorname{dim}(\mathcal{M})$ is a completely separable Hausdorff space that is locally homeomorphic to an open subset of Euclidean space of the same dimension, specifically $\mathbb{R}^{\operatorname{dim}(\mathcal{M})}$. More formally, for each point $x \in \mathcal{M}$, there exists an open neighborhood $ U \subseteq \mathcal{M} $ and a homeomorphism $\phi: U \to \widehat{U}$, where $\widehat{U} \subseteq \mathbb{R}^{\operatorname{dim}(\mathcal{M})}$.
\end{definition}

The value $\dim(\mathcal{M})$ is referred to as the \emph{dimension} of the manifold. Examples of manifolds include tori, spheres, $\mathbb{R}^d$, and graphs of continuous functions. \emph{Manifolds with boundaries} differ from boundaryless manifolds in that they may have neighborhoods that locally resemble open subsets of closed \emph{$\operatorname{dim}(\mathcal{M})$-dimensional upper half-spaces}, denoted as $\mathbb{H}^{\operatorname{dim}(\mathcal{M})} \subseteq \mathbb{R}^{\operatorname{dim}(\mathcal{M})}$, defined as follows:
\[
\mathbb{H}^{d} = \{(x_1, x_2, \dots, x_d) \in \mathbb{R}^d \; |\: x_d \geq 0\}.
\]

\begin{definition}[Local Coordinates]
    Given a \emph{chart} $(U, \phi)$—a pair consisting of a local neighborhood $U$ and the corresponding homeomorphism $\phi: U \to \widehat{U}$—on a manifold $\mathcal{M}$ with dimension $d$, we define \emph{local coordinates} $(x^1, x^2, \dots, x^d)$ such that 
    \[
    \phi (p) = (x^1(p), x^2(p), \dots, x^d(p)),
    \]
    for each point $p \in U$.
\end{definition}

\begin{definition}[Tangent Space]
    At each point $x \in \mathcal{M}$, the \emph{tangent space} $T_x \mathcal{M}$ is defined as the vector space formed by the tangent vectors to the manifold $\mathcal{M}$ at $x$. A tangent vector \( v \in T_x \mathcal{M} \) can be represented as the derivative of a smooth curve $\gamma(t): (-\epsilon, \epsilon) \to \mathcal{M}$ defined on the manifold with the property that $\gamma(0) = x$. It is expressed as 
    \[
    \nu = \frac{d}{dt} \gamma(t) \bigg|_{t=0}.
    \]
\end{definition}

The tangent space $T_x \mathcal{M}$ is a real vector space with dimension $\operatorname{dim}(\mathcal{M})$.

\begin{definition}[Riemannian Metric Tensor]
    A \emph{Riemannian metric tensor} $ g $\footnote{This notation differs from $g$, which denotes group elements.} on a manifold $ \mathcal{M} $ is a smooth inner product defined on the tangent space $ T_x \mathcal{M} $ at each point $ x \in \mathcal{M} $. For any two tangent vectors $ u, v \in T_x \mathcal{M} $, the metric assigns a real number $ g_x(u, v) \in \mathbb{R}$.
\end{definition}

\begin{definition}[Riemannian Manifold]
    A \emph{Riemannian manifold} is defined as a pair $ (\mathcal{M}, g) $, where $ \mathcal{M} $ is a manifold and $g$ is a \emph{Riemannian metric tensor} defined on the tangent space $T_x \mathcal{M}$ at each point $x \in \mathcal{M}$.
\end{definition}

A Riemannian metric tensor provides essential tools for the study of manifolds, which we formalize below. It enables the following:

\begin{itemize}
    \item the definition of the geodesic distance $d(x,y)$ between any two points $x,y \in \mathcal{M}$ on the manifold, 
    \item  a volume element $d \operatorname{vol}_g(x)$ over the manifold, serving as the measure for the Borel sigma-algebra over open subsets of the manifold $\mathcal{M}$, and 
    \item the measurement of the angle between any two tangent vectors $u, v \in T_x \mathcal{M}$, which in turn provides the size of tangent vectors.
\end{itemize}

\begin{definition}[Geodesic Distance]
    The \emph{geodesic distance} $d_{\mathcal{M}}(x, y)$ between any two points $x, y \in \mathcal{M}$ on the manifold is defined as the infimum length among all smooth curves $\gamma: [0, 1] \to \mathcal{M}$ connecting $x$ to $y$ ($\gamma(0) = x, \gamma(1) = y$). The length of a curve $\gamma$ is defined as 
    \[
        L(\gamma) = \int_0^1 \sqrt{g_{\gamma(t)}\left( \dot{\gamma}, \dot{\gamma} \right)} \, dt,
    \]
    where $\dot{\gamma}$ denotes the derivative $\frac{d\gamma}{dt}$.
\end{definition}

\begin{definition}[Volume Element]
    The \emph{volume element} $ d\mathrm{vol}_g(x) $ on a Riemannian manifold $ (\mathcal{M}, g) $ is defined as 
    \[
    d\mathrm{vol}_g = \sqrt{\det(g_{ij})} \, dx^1 \wedge dx^2 \wedge \cdots \wedge dx^n,
    \]
    where $ g_{ij} $ are the components of the Riemannian metric tensor, $(x^1, x^2, \ldots, x^n)$ are the local coordinates, and $\wedge$ denotes the exterior product. 
\end{definition}

The volume element provides a way to compute the volume of subsets of $ \mathcal{M}$ by integrating functions over $\mathcal{M}$. Moreover, a Borel measure $\mu$ on open subsets of $\mathcal{M}$ can be derived form the volume element to form probability measure space, e.g., uniformly over the manifold.

\begin{definition}[Smooth Map]
    A maping $f: \mathcal{M} \to \mathcal{N}$ is a smooth map if for any charts $(U, \phi)$ on $\mathcal{M}$, and $(V, \psi)$ on $\mathcal{N}$, the composition function $\psi \circ f \circ \phi^{-1}: \mathbb{R}^{\dim(\mathcal{M})} \to \mathbb{R}^{\dim(\mathcal{N})}$ is infinitely differentiable. 
\end{definition}

\begin{definition}[Pullback of the metric tensor]
    Given Riemannian manifolds $\mathcal{M}$, $(\mathcal{N}, g)$ and $\varphi: \mathcal{M} \to \mathcal{N}$ 
    a smooth map between them. The \emph{pullback of the metric tensor} $g$ by $\phi$, denoted by $\varphi^* g$ is the Riemannian metric tensor on manifold $\mathcal{M}$ defined by,
    \[
    (\varphi^* g)_x(u, v) = g_{\varphi(x)}(d\varphi_x(u), d\varphi_x(v)), \textrm{ for all points } x \in \mathcal{M} \textrm{ and all }  u, v \in T_x \mathcal{M},
    \]
    where $d\varphi_x: T_x \mathcal{M} \to T_{\varphi(x)} \mathcal{N}$ is the differential of the map $\varphi$ at point $x$.
\end{definition}

Thus, the pullback metric $\varphi^* g$ on $\mathcal{M}$ captures the relation between tangent vectors of $\mathcal{M}$ in terms of how they are mapped to the manifold $\mathcal{N}$ via $\varphi$.

\begin{definition}[Connected Manifold]
    A manifold $\mathcal{M}$ is \emph{connected} if for any two points $x, x^\prime \in \mathcal{M}$, there is a smooth curve $\gamma: [0, 1] \to \mathcal{M}$ such that $\gamma(0) = x$ and $\gamma(1) = x^\prime$.
\end{definition}

Throughout this paper, we focus on smooth, connected, compact and boundaryless Riemannian manifolds $(M, g)$ unless stated otherwise. For a Riemannian manifolds $(M, g)$, we denoted the dot product induced by the metric tensor $g$ as $\langle u, v \rangle_{g_x} = g_x(u, v)$ for all $u, v \in T_x$. We drop the subscript $x$ whenever it is clear from the context.

\subsection{Functional Spaces over Manifolds} \label{app:functional_manifold}

Now equipped with probability measures on manifold discussed in \Cref{app:manifold}, we are ready to define functional spaces $L^p(\mathcal{M})$ and Sobolev spaces $\mathcal{H}^s(\mathcal{M})$ on manifold $\mathcal{M}$ analogously to their Euclidean counterparts in the following,

\begin{definition}[Functional Spaces on Manifolds]
The Lebesgue functional spaces $ L^p(\mathcal{M}) $ for $p \in [1, \infty] $, and the Sobolev spaces $H^s(\mathcal{M})$ for $s \geq 0$ on a smooth Manifold $\mathcal{M}$, are defined as follows:
\begin{itemize}
    \item The \emph{Lebesgue space} \( L^p(\mathcal{M}) \) consists of measurable functions \( f: \mathcal{M} \to \mathbb{R} \) such that $\| f \|_{L^p(\mathcal{M})} < \infty$ where,
    \[
    \| f \|_{L^p(\mathcal{M})} = \begin{cases}
        \left( \int_{\mathcal{M}} |f(x)|^p \, d\mu(x) \right)^{1/p} & \textrm{if } p \in [1, \infty) \\
        \operatorname*{ess\,sup}_{x \in \mathcal{M}} |f(x)| < \infty. & \textrm{if } p = \infty
    \end{cases},
    \]
    where $\mu$ is the uniform measure over the manifold $\mathcal{M}$.
    \item The \emph{Sobolev space} \( H^s(\mathcal{M}) \) consists of measurable functions whose derivatives up to order \( s \) are in \( L^2(\mathcal{M}) \), i.e.,
    \[
    H^s(\mathcal{M}) = \left\{ f \in L^2(\mathcal{M}) \mid D^\alpha f \in L^2(\mathcal{M}) \text{ for all multi-indices } \alpha \text{ with } |\alpha| \leq s \right\}.
    \]
\end{itemize}
\end{definition}

\subsection{Lie Group of Isometry Maps}

In this section, first we state basic definition of isometric mappings over manifolds and then wrap up by characterizing the isometry group over the manifold.

\begin{definition}[Isometry Map]
    A bijective mapping $\tau: \mathcal{M} \to \mathcal{M}$ is an \emph{isometry} on the manifold ($\mathcal{G}$, g) if $d(\tau(x), \tau(x^\prime)) = d(x, x^\prime) $.
\end{definition}

We also state a brief definition of Lie groups for completeness.

\begin{definition}[Lie group]
    A group $G$ is a \emph{Lie group} with smooth group operations (multiplication and inversion) if it is additionally a smooth manifold.
\end{definition}

The space of bijective Riemannian isometries defined on the manifold $(\mathcal{M}, g)$, denoted by $\operatorname{ISO}(\mathcal{M}, g)$ constitutes a group with composition operation. The celebrated Myers–Steenrod theorem states that any isometry map $\tau \in \operatorname{ISO}(\mathcal{M}, g)$ between connected manifolds is an isometry \citep{myers1939group,palais1957differentiability}. \citet{myers1939group} took it a step further and proved that isometry group of a Riemannian manifold  $(\mathcal{M}, g)$ is a Lie group.

Alternatively, $\operatorname{ISO}(\mathcal{M}, g)$ can be charecterized by the pullback of the metric tensor. In terms, $\tau \in \operatorname{ISO}(\mathcal{M}, g)$ if and only if $g = \tau^* g$ \citep{petersen2006riemannian}.

\subsection{Laplacian on Manifolds}

In this section, we reiterate over definition of Laplace-Beltrami operator on manifolds (which is the generalization of the Laplacian operator $\Delta = \partial^2_1 + \partial^2_2 + \dots + \partial^2_d$ defined on the Euclidean space $\mathbb{R}^d$) and state a several interesting properties that will utilize later. We refer to \citet{chavel1984eigenvalues} for additional details.

\begin{definition}[Laplace-Beltrami operator]
    Given a Riemannian manifold $(\mathcal{M}, g)$, the \emph{Laplace-Beltrami} operator $\Delta_g : \mathcal{H}^s (\mathcal{M}) \to \mathcal{H}^{s - 2} (\mathcal{M})$ acts on a smooth function $f: \mathcal{M} \to \mathbb{R}$ by 
    \[
    \Delta_g f = \operatorname{div}_g (\operatorname{grad}_g (f)).
    \]
\end{definition}

Moreover, $\Delta_g f$ has an equivalent weak formulation \citep{evans2022partial}, as the unique continuous linear operator $\Delta_g : \mathcal{H}^s (\mathcal{M}) \to \mathcal{H}^{s - 2} (\mathcal{M})$ which is a solution to the equation,
\[
\int_{\mathcal{M}} \psi(x) \Delta_g \phi(x) d\operatorname{vol}_g(x) + \int_{\mathcal{M}} \langle \nabla_g \psi (x), \nabla_g \phi(x) \rangle_g  d\operatorname{vol}_g(x) = 0, \forall \phi,\psi \in \mathcal{H}^s(\mathcal{M}). \numberthis{eq:identity_laplacian}
\]

The Laplace-Beltrami operator $\Delta_g$ is self-adjoint, eliptic and diagonalizable in $L^p(\mathcal{M})$ \citep{chavel1984eigenvalues,evans2022partial}, yielding a sequence of orthonormal eigenfunctions \(\phi_{\lambda,\ell} \in L^2(\mathcal{M})\), where \(\lambda \in \{\lambda_0, \lambda_1, \ldots\} \subseteq [0, \infty)\) represents the eigenvalue corresponding to the eigenfunction \(\phi_{\lambda,\ell}\), and \(\ell \in [m_{\lambda}]\) indexes the multiplicity of \(\lambda\), denoted by \(m_{\lambda}\) such that $\Delta_g \phi_{\lambda_i, \ell} + \lambda_\ell \phi_{\lambda_i, \ell} = 0$ for all $\ell \in \{1, \dots, m_{\lambda_i}\}$. Note that the basis starts with the constant function $\phi_0 \equiv 1$ and $\lambda_0 = 0$. Hence, one can write $\Delta_g f = - \sum_{i=0}^{\infty} \sum_{\ell = 1}^{{m_\lambda}_i} \lambda_i \langle f, \phi_{\lambda_i, \ell} \rangle \phi_{\lambda_i, \ell}$.

\begin{lemma}
    For any function $f \in L^2(\mathcal{M})$, such that $f$ is decomposed into the basis $\{\phi_{\lambda, \ell} \}_{\lambda = 1}^{\infty}$ as $f = \sum_{i=0}^{\infty} \sum_{\ell = 1}^{{m_\lambda}_i} \langle f, \phi_{\lambda_i, \ell} \rangle_{L^2(\mathcal{M})} \phi_{\lambda, \ell}$, we know that
    \[
        \|\nabla_g f\|_{L^2(\mathcal{M})}^2 = \sum_{i=0}^{\infty} \sum_{\ell = 1}^{{m_\lambda}_i} \lambda_i \langle f, \phi_{\lambda_i, \ell} \rangle_{L^2(\mathcal{M})}^2,
    \]
    for convergent summations.
\end{lemma}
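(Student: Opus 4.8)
The plan is to reduce the identity to the weak (Green's) formulation \eqref{eq:identity_laplacian} together with a spectral-truncation approximation argument, exactly as one proves Parseval for the Dirichlet energy in the Euclidean case.

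First I would record the key consequence of \eqref{eq:identity_laplacian}: taking $\psi = \phi_{\lambda_i,\ell}$, $\phi = \phi_{\lambda_j,\ell'}$ and using $\Delta_g \phi_{\lambda_j,\ell'} = -\lambda_j \phi_{\lambda_j,\ell'}$ gives
\[
\langle \nabla_g \phi_{\lambda_i,\ell}, \nabla_g \phi_{\lambda_j,\ell'} \rangle_{L^2(\mathcal{M})} = \lambda_j \langle \phi_{\lambda_i,\ell}, \phi_{\lambda_j,\ell'}\rangle_{L^2(\mathcal{M})} = \lambda_i\, \delta_{ij}\delta_{\ell\ell'},
\]
where the last equality uses orthonormality of the eigenbasis (the forms $\lambda_i$ and $\lambda_j$ agree because the inner product is nonzero only when $i=j$). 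In particular $\|\nabla_g \phi_{\lambda_i,\ell}\|_{L^2(\mathcal{M})}^2 = \lambda_i$, each $\phi_{\lambda_i,\ell}$ lies in $H^1(\mathcal{M})$, and the gradients of distinct basis functions are $L^2$-orthogonal.

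Next, writing $\hat f_{i,\ell} \coloneqq \langle f, \phi_{\lambda_i,\ell}\rangle_{L^2(\mathcal{M})}$ and $f_N \coloneqq \sum_{i=0}^{N}\sum_{\ell=1}^{m_{\lambda_i}} \hat f_{i,\ell}\,\phi_{\lambda_i,\ell}$, linearity of $\nabla_g$ and the orthogonality above yield
\[
\|\nabla_g f_N\|_{L^2(\mathcal{M})}^2 = \sum_{i=0}^{N}\sum_{\ell=1}^{m_{\lambda_i}} \lambda_i\, \hat f_{i,\ell}^2, \qquad \|\nabla_g f_N - \nabla_g f_M\|_{L^2(\mathcal{M})}^2 = \sum_{i=M+1}^{N}\sum_{\ell=1}^{m_{\lambda_i}} \lambda_i\, \hat f_{i,\ell}^2 .
\]
Under the standing hypothesis that $\sum_{i}\sum_{\ell}\lambda_i \hat f_{i,\ell}^2$ converges, the right-hand side of the second identity tends to $0$, so $(\nabla_g f_N)_N$ is Cauchy in $L^2(\mathcal{M})$ and converges to some $v \in L^2(\mathcal{M})$; meanwhile $f_N \to f$ in $L^2(\mathcal{M})$ by Parseval. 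Since the gradient $\nabla_g : H^1(\mathcal{M}) \to L^2(\mathcal{M})$ is a closed operator — equivalently, by passing to the limit in \eqref{eq:identity_laplacian} tested against arbitrary $\psi \in H^1(\mathcal{M})$ — we get $f \in H^1(\mathcal{M})$ with $\nabla_g f = v$. Letting $N \to \infty$ in the first identity then gives $\|\nabla_g f\|_{L^2(\mathcal{M})}^2 = \sum_{i}\sum_{\ell} \lambda_i \hat f_{i,\ell}^2$, which is the claim.

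The main obstacle is this final step: identifying the $L^2$-limit $v$ of the truncated gradients with $\nabla_g f$, i.e., the closedness of $\nabla_g$ (equivalently, that the weak-derivative definition of $H^1(\mathcal{M})$ coincides, with equivalent norms, with the spectral space $\{f : \sum_i\sum_\ell (1+\lambda_i)\hat f_{i,\ell}^2 < \infty\}$). On a smooth compact boundaryless manifold this is standard elliptic theory for $\Delta_g$; I would either cite it (e.g., \citet{chavel1984eigenvalues,evans2022partial}) or derive it directly from \eqref{eq:identity_laplacian} by testing against $\psi = \phi_{\lambda_j,\ell'}$ and noting $\langle v, \nabla_g \phi_{\lambda_j,\ell'}\rangle_{L^2(\mathcal{M})} = \lim_N \langle \nabla_g f_N, \nabla_g\phi_{\lambda_j,\ell'}\rangle_{L^2(\mathcal{M})} = \lambda_j \hat f_{j,\ell'}$, which pins down $v$ uniquely as the distributional gradient of $f$.
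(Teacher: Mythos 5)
Your proof is correct, and it rests on the same key ingredient as the paper's --- the weak formulation \eqref{eq:identity_laplacian} --- but it takes a noticeably more careful route. The paper's proof is a three-line computation: it plugs $f$ itself into both slots of \eqref{eq:identity_laplacian}, writes $\|\nabla_g f\|_{L^2(\mathcal{M})}^2 = -\int_{\mathcal{M}} f\,\Delta_g f \, d\operatorname{vol}_g$, and then expands $\Delta_g f = -\sum_{i,\ell}\lambda_i \langle f,\phi_{\lambda_i,\ell}\rangle\phi_{\lambda_i,\ell}$ and invokes orthonormality. That argument implicitly assumes $f$ is regular enough for these manipulations (e.g., that $\Delta_g f$ makes sense in $L^2$, or that the formal term-by-term exchange is legitimate). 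You instead apply \eqref{eq:identity_laplacian} only to the smooth eigenfunctions to get $\langle \nabla_g\phi_{\lambda_i,\ell},\nabla_g\phi_{\lambda_j,\ell'}\rangle = \lambda_i\delta_{ij}\delta_{\ell\ell'}$, compute the Dirichlet energy of the finite truncations $f_N$ exactly, and then pass to the limit via a Cauchy argument plus closedness of $\nabla_g$ (equivalently, the identification of the spectral and weak-derivative descriptions of $H^1(\mathcal{M})$). What your version buys is a rigorous justification of the lemma precisely under its stated hypothesis (``for convergent summations''), including the conclusion $f\in H^1(\mathcal{M})$ with $\nabla_g f$ equal to the $L^2$-limit of the truncated gradients; the closed-operator step you flag as the main obstacle is indeed standard on a compact boundaryless manifold and can be cited or derived the way you indicate. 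What the paper's version buys is brevity, at the cost of glossing over exactly the convergence issues you address.
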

\begin{proof}
    By \Cref{eq:identity_laplacian},
    \[
    \|\nabla_g f\|_{L^2(\mathcal{M})}^2 & = \int_{\mathcal{M}} \langle \nabla_g f(x), \nabla_g f(x) \rangle_g \, d\operatorname{vol}_g(x) \\
    & = - \int_{\mathcal{M}} f(x) \Delta_g f(x) \, d\operatorname{vol}_g(x) \\
    & = \sum_{i=0}^{\infty} \sum_{\ell = 1}^{{m_\lambda}_i} \lambda_i \langle f, \phi_{\lambda_i, \ell} \rangle_{L^2(\mathcal{M})}^2.
    \]
\end{proof}

\subsection{Commutativity of Laplacian and Isometric Group Actions} \label{app:commute}

Let $G$ be a group acting isometrically on a compact, smooth, boundaryless manifold $\mathcal{M}$. As we stated in the main body of the paper, we have $\Delta_{\mathcal{M}}(T_g\phi) = T_g(\Delta_{\mathcal{M}}(\phi))$ for each smooth function $\phi$ on manifold $\mathcal{M}$, where $T_g\phi = \phi(gx)$. To see how,  note that by \Cref{eq:identity_laplacian}, this is equivalent to 
\begin{align}
    \int_{\mathcal{M}} h \Delta_{\mathcal{M}}(T_g\phi) d\operatorname{vol}_g(x) = 
    \int_{\mathcal{M}} h T_g(\Delta_{\mathcal{M}}(\phi)) d\operatorname{vol}_g(x),
\end{align}
for each smooth function $h$ on manifold $\mathcal{M}$. By changing the variables in the integrable and noting that $dx = d(gx)$ from isometry, we have
\begin{align}
        \int_{\mathcal{M}} h \Delta_{\mathcal{M}}(T_g\phi) d\operatorname{vol}_g(x) &= 
    -\int_{\mathcal{M}}  \langle \nabla h, \nabla T_g \phi\rangle_g d\operatorname{vol}_g(x) \\
    & =-\int_{\mathcal{M}}  \langle \nabla T_{g^{-1}} h, \nabla  \phi\rangle_g d\operatorname{vol}_g(x)\\
    & = \int_{\mathcal{M}} T_{g^{-1}}h \Delta_{\mathcal{M}}(\phi) d\operatorname{vol}_g(x)\\
    & = \int_{\mathcal{M}} h T_g(\Delta_{\mathcal{M}}(\phi)) d\operatorname{vol}_g(x).
\end{align}

\subsection{Weyl's Law under Invariances} \label{app:weyl}

Weyl’s law characterizes the asymptotic distribution of the eigenvalues in a closed-form formula \citep{hormander1968spectral,sogge1988concerning,canzani2013analysis}. Let us denote dimension of the space spanned by the eigenvectors corresponding to eigenvalue of the Laplace-Beltrami operator up to $\lambda$ as
\[
D_\lambda \coloneqq \sum_{\lambda^\prime \leq \lambda} m_{\lambda^\prime}.
\]

\begin{theorem}[Weyl’s law \citep{hormander1968spectral,sogge1988concerning,canzani2013analysis}]
Let $(\mathcal{M}, g)$ be a compact, boundaryless $d$-dimensional Riemannian manifold. The asymptotic behavior of dimension count $D_\lambda$ follows
\[
D_\lambda = \frac{\omega_d \, \operatorname{vol}(\mathcal{M})}{(2\pi)^d} \lambda^{d/2} + \mathcal{O}(\lambda^{(d-1)/2}),
\]
where $ \omega_d = \frac{\pi^{d/2}}{\Gamma\left(\frac{d}{2} + 1\right)} $ is the volume of the unit $d$-dimensional ball in $\mathbb{R}^d$, $\text{vol}(\mathcal{M}) $ is the Riemannian volume of $ \mathcal{M}$, and $\mathcal{O}(\lambda^{(d-1)/2})$ represents the error term.
\end{theorem}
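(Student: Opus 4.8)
The plan is to reduce the statement to the asymptotics of a smoothed spectral counting function, treating the leading term and the sharp remainder by different tools. For the leading term I would go through the heat trace. Writing $p_t(x,y)$ for the heat kernel of $\Delta_g$ on $\mathcal{M}$, one has $\sum_{j} e^{-t\lambda_j} = \int_{\mathcal{M}} p_t(x,x)\, d\operatorname{vol}_g(x)$, and the Minakshisundaram--Pleijel short-time expansion $p_t(x,x) = (4\pi t)^{-d/2}\bigl(1 + a_1(x)t + \cdots\bigr)$, uniform in $x$, gives $\sum_j e^{-t\lambda_j} = (4\pi t)^{-d/2}\operatorname{vol}(\mathcal{M}) + O\bigl(t^{1-d/2}\bigr)$ as $t\to 0^+$. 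A Karamata Tauberian theorem then yields $D_\lambda \sim \frac{\operatorname{vol}(\mathcal{M})}{(4\pi)^{d/2}\Gamma(d/2+1)}\lambda^{d/2}$, and the identity $\frac{1}{(4\pi)^{d/2}\Gamma(d/2+1)} = \frac{\pi^{d/2}/\Gamma(d/2+1)}{(2\pi)^d} = \frac{\omega_d}{(2\pi)^d}$ matches the claimed constant. (As an alternative for the leading term only, Weyl's original Dirichlet--Neumann bracketing works: partition $\mathcal{M}$ into small coordinate boxes, sandwich the eigenvalues between the Dirichlet and Neumann problems on Euclidean boxes where the eigenvalue count is an explicit lattice-point count, and sum.)

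To upgrade the error to the sharp $O\bigl(\lambda^{(d-1)/2}\bigr)$, the heat kernel is too lossy, so I would pass to the half-wave propagator $U(t) = e^{it\sqrt{-\Delta_g}}$ and follow the Levitan--Avakumovi\'c--H\"ormander method. The steps are: (i) construct a microlocal parametrix for $U(t)$ as a Fourier integral operator associated to the geodesic flow, valid for $|t|$ small; (ii) pick a Schwartz function $\rho$ with $\widehat\rho$ supported in a small neighborhood of $0$ (short enough that no geodesic loop contributes), and evaluate $\partial_\mu(e*\rho)(x,x,\mu)$ from the parametrix, where $e(x,y,\mu) = \sum_{\sqrt{\lambda_j}\le\mu}\phi_j(x)\overline{\phi_j(y)}$, obtaining $e(x,x,\mu) = (2\pi)^{-d}\omega_d\,\mu^d + O\bigl(\mu^{d-1}\bigr)$ pointwise and uniformly in $x$; (iii) integrate over $\mathcal{M}$ and invoke H\"ormander's Tauberian theorem for monotone functions (using the uniform smoothed-derivative bound) to conclude $D_{\mu^2} = (2\pi)^{-d}\omega_d\operatorname{vol}(\mathcal{M})\,\mu^d + O\bigl(\mu^{d-1}\bigr)$, then substitute $\mu=\sqrt\lambda$.

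The main obstacle is steps (ii)--(iii): the FIO parametrix construction and, above all, the Tauberian passage from the smoothed count back to $D_\lambda$, which only gives a one-sided gain unless one uses finite propagation speed together with the absence of short geodesic loops (valid below the injectivity radius) so that the singularity of the wave trace is concentrated at $t=0$ and fully controls the $\lambda^{(d-1)/2}$ term. I should note, however, that the downstream complexity arguments in this paper only need the leading order $D_\lambda = \Theta(\lambda^{d/2})$, for which the heat-trace/Tauberian route suffices and the wave machinery can be skipped; and for the invariant setting the same argument runs on the invariant subspace $L^2(\mathcal{M})^G$ (equivalently on the quotient $\mathcal{M}/G$), giving the analogous law with $\operatorname{vol}(\mathcal{M})$ replaced by $\operatorname{vol}(\mathcal{M})/|G|$ up to lower-order corrections from the singular strata of the action.
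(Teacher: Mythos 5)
The paper does not prove this statement; it quotes Weyl's law as a classical result, citing exactly the sources (H\"ormander 1968, Sogge, Canzani) whose argument you sketch, so your route --- heat trace plus Karamata for the leading term with the correct constant identification $\frac{1}{(4\pi)^{d/2}\Gamma(d/2+1)}=\frac{\omega_d}{(2\pi)^d}$, and the half-wave FIO parametrix with H\"ormander's Tauberian lemma for the sharp $\mathcal{O}(\lambda^{(d-1)/2})$ remainder --- is precisely the standard proof the paper relies on by citation. Your closing remarks are also consistent with how the result is used here: the downstream bounds only need $D_\lambda=\Theta(\lambda^{d/2})$, and the invariant refinement you mention is the separately cited dimension-counting theorem with $\operatorname{vol}(\mathcal{M}/G)$ in place of $\operatorname{vol}(\mathcal{M})$.
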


Define $D_{\lambda, G}$ as the dimension of the space induced by projection of the corresponding eigenspaces of $D_\lambda$ into the space of $G$-invariant functions. \citet{tahmasebi2023exact} proved the following characterization over this dimension as $\lambda \to \infty$. 

\begin{theorem}[Dimension counting \citep{tahmasebi2023exact}]
Let $(\mathcal{M}, g)$ be a compact, boundaryless $d$-dimensional Riemannian manifold, and $G$ be a compact finite Lie group acting isometrically on $(\mathcal{M}, g)$. Then.
\[
D_{\lambda, G} = \frac{\omega_d \, \operatorname{vol}(\mathcal{M}/G)}{(2\pi)^d} \lambda^{d/2} + \mathcal{O}(\lambda^{(d-1)/2}),
\]
as $\lambda \to \infty$, where again $ \omega_d$ is the volume of the unit $d$-dimensional ball in $\mathbb{R}^d$.
\end{theorem}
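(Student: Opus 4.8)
The plan is to express the number of $G$-invariant eigenfunctions with eigenvalue $\le\lambda$ as an average of equivariant traces over $G$, peel off the contribution of the identity element (which is exactly the ordinary Weyl law already stated), and show that every nontrivial group element contributes only at the order of the Weyl remainder, i.e.\ $\mathcal{O}(\lambda^{(d-1)/2})$. For an eigenvalue $\lambda'$ let $V_{\lambda'}\coloneqq\operatorname{span}(\phi_{\lambda',\ell}:\ell\in[m_{\lambda'}])$ and let $V_{\lambda'}^{G}\subseteq V_{\lambda'}$ be its subspace of $G$-invariant functions. By the commutativity of $\Delta_{\mathcal{M}}$ with the isometric action (\Cref{app:commute}), $G$ acts on $V_{\lambda'}$ through the orthogonal matrices $D^{\lambda'}(g)$, so $\frac{1}{|G|}\sum_{g\in G}D^{\lambda'}(g)$ is the orthogonal projector onto $V_{\lambda'}^{G}$, whence $\dim V_{\lambda'}^{G}=\frac{1}{|G|}\sum_{g\in G}\Tr(D^{\lambda'}(g))$. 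Summing over $\lambda'\le\lambda$,
\[
D_{\lambda,G}=\sum_{\lambda'\le\lambda}\dim V_{\lambda'}^{G}=\frac{1}{|G|}\sum_{g\in G}N_{g}(\lambda),\qquad N_{g}(\lambda)\coloneqq\sum_{\lambda'\le\lambda}\Tr(D^{\lambda'}(g)).
\]
For $g=e$ we have $N_{e}(\lambda)=\sum_{\lambda'\le\lambda}m_{\lambda'}=D_{\lambda}$, which by the Weyl law stated above equals $\frac{\omega_{d}\operatorname{vol}(\mathcal{M})}{(2\pi)^{d}}\lambda^{d/2}+\mathcal{O}(\lambda^{(d-1)/2})$.

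The crux — and the step I expect to be the main obstacle — is the equivariant Weyl law: $N_{g}(\lambda)=\mathcal{O}(\lambda^{(d-1)/2})$ for every $g\ne e$. I would prove this by the wave-trace method (the equivariant version of the Levitan--Avakumovi\'c--H\"ormander argument for the sharp Weyl remainder), since it handles the fact that the character sequence $\Tr(D^{\lambda'}(g))$ is signed, which would obstruct a naive Tauberian argument. With $T_{g}f(x)\coloneqq f(gx)$, the distribution $t\mapsto\sum_{\lambda'}\Tr(D^{\lambda'}(g))\,e^{it\sqrt{\lambda'}}=\Tr\big(T_{g}\,e^{it\sqrt{-\Delta_{\mathcal{M}}}}\big)$ is the trace of a Fourier integral operator whose canonical relation is the graph of the lifted action of $g$ followed by the cogeodesic flow $\Phi_{t}$ on $T^{\ast}\mathcal{M}\setminus 0$; its singular support near $t=0$ sits over covectors fixed by this lifted action. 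Since $g$ is an isometry of a connected manifold and $g\ne e$, each component of $\operatorname{Fix}(g)$ is a totally geodesic submanifold of dimension $d_{g}\le d-1$, and $dg$ has no nonzero fixed vector normal to $\operatorname{Fix}(g)$; hence the fixed set in the cosphere bundle is $S^{\ast}\operatorname{Fix}(g)$, of dimension $2d_{g}-1$, so of codimension at least $2$ in $S^{\ast}\mathcal{M}$. Inserting the standard half-wave parametrix and running the Fourier--Tauberian argument with a cutoff in $t$ near $0$ yields $N_{g}(\lambda)=c_{g}\lambda^{d_{g}/2}+\mathcal{O}(\lambda^{(d_{g}-1)/2})=\mathcal{O}(\lambda^{(d-1)/2})$, where $c_{g}$ is an integral over $\operatorname{Fix}(g)$ of a density built from $dg$ on the normal bundle (and $N_{g}$ is of even lower order when $\operatorname{Fix}(g)=\varnothing$). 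A softer computation of the equivariant heat trace $\int_{\mathcal{M}}p_{t}(x,gx)\,d\operatorname{vol}(x)=\mathcal{O}(t^{-d_{g}/2})$ as $t\to0^{+}$ recovers the leading power $d_{g}/2\le(d-1)/2$ but not, by itself, the stated remainder.

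Finally, since $G$ is finite there are only $|G|-1$ nontrivial terms, each $\mathcal{O}(\lambda^{(d-1)/2})$, so
\[
D_{\lambda,G}=\frac{1}{|G|}\Big(N_{e}(\lambda)+\sum_{g\ne e}N_{g}(\lambda)\Big)=\frac{1}{|G|}\cdot\frac{\omega_{d}\operatorname{vol}(\mathcal{M})}{(2\pi)^{d}}\lambda^{d/2}+\mathcal{O}(\lambda^{(d-1)/2}).
\]
It remains to identify $\tfrac{1}{|G|}\operatorname{vol}(\mathcal{M})=\operatorname{vol}(\mathcal{M}/G)$: the orbit map $\mathcal{M}\to\mathcal{M}/G$ is a Riemannian covering of degree $|G|$ off the measure-zero singular locus (points with nontrivial stabilizer), so the pushforward of $\operatorname{vol}_{\mathcal{M}}$ is $|G|$ times the orbifold volume of $\mathcal{M}/G$. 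This gives the claimed asymptotic $D_{\lambda,G}=\frac{\omega_{d}\operatorname{vol}(\mathcal{M}/G)}{(2\pi)^{d}}\lambda^{d/2}+\mathcal{O}(\lambda^{(d-1)/2})$. An equivalent packaging of the argument is to invoke an orbifold Weyl law directly on $\mathcal{M}/G$, whose Laplace spectrum is precisely the $G$-invariant spectrum of $\Delta_{\mathcal{M}}$; this reduces to the same equivariant analysis in local orbifold charts.
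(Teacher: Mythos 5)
This theorem is not proved in the paper at all: it is imported verbatim from \citet{tahmasebi2023exact}, so there is no in-paper argument to compare against. Your proposal is the standard route to this result and, in outline, the same one used in the cited literature: write \(D_{\lambda,G}=\frac{1}{|G|}\sum_{g\in G}N_g(\lambda)\) via the character/projector identity \(\dim V_{\lambda'}^G=\frac{1}{|G|}\sum_g \Tr(D^{\lambda'}(g))\), extract the main term from the ordinary Weyl law at \(g=e\), control each \(g\neq e\) by an equivariant Weyl estimate localized on \(\operatorname{Fix}(g)\) (a totally geodesic submanifold of dimension at most \(d-1\) when the action is effective on a connected manifold), and identify \(\operatorname{vol}(\mathcal{M})/|G|=\operatorname{vol}(\mathcal{M}/G)\). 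The one place where you do not actually prove anything is the crux you yourself flag: the sharp bound \(N_g(\lambda)=c_g\lambda^{d_g/2}+\mathcal{O}(\lambda^{(d_g-1)/2})\) is invoked from the equivariant wave-trace theory (Donnelly; Br\"uning--Heintze) rather than derived, and you correctly note that the heat-trace shortcut only recovers the leading power, not the \(\mathcal{O}(\lambda^{(d-1)/2})\) remainder claimed in the statement; citing that result explicitly is legitimate here, but without it your argument only yields \(o(\lambda^{d/2})\) error for the nontrivial terms. Two small housekeeping points: the argument needs the action to be effective (otherwise some \(\operatorname{Fix}(g)=\mathcal{M}\) and the generic orbit has fewer than \(|G|\) points, so both sides must be reinterpreted through \(G\) modulo its kernel), and connectedness of \(\mathcal{M}\) should be stated since it is what forces \(\dim\operatorname{Fix}(g)\le d-1\) for \(g\neq e\).
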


\subsection{Sobolev Spaces on Manifolds}

The ordinary definition of Sobolev spaces on manifolds deals with having square-integrable derivatives up to an order $s$. Here, since our focus is on the spectral approach, we present the spectral definition of Sobolev spaces. 
\begin{definition}[Sobolev spaces] The Sobolev space of functions $H^s(\mathcal{M})$ on a compact, smooth, boundaryless Riemannian manifold $\mathcal{M}$ is defined as:
\[
    H^s(\mathcal{M})\coloneqq \Big \{
            f = \sum_{\lambda} \sum_{\ell=1}^{m_\lambda} f_{\lambda, \ell}\phi_{\lambda,\ell}(x) : 
            \|f\|^2_{H^s(\mathcal{M})}\coloneqq \sum_{\lambda} \sum_{\ell = 1}^{m_\lambda} D_{\lambda} ^{\alpha}f_{\lambda,\ell}^2 < \infty
    \Big\},
\]
    where $\alpha \coloneqq 2s/d$. 
\end{definition}
Note that the above definition is equivalent to the other definition of the Sobolev spaces that involves considering $\lambda^{s}$ instead of $D_\lambda^\alpha$ above. Using Weyl's law (see \Cref{app:weyl}), one can show that both definitions are equivalent.

\subsection{Sobolev Kernels} \label{app:sob_kernel}
Sobolev spaces are RKHS when $s>d/2$. Indeed, the Sobolev kernel can be defined as:
\[
    K_{H^s(\mathcal{M})}(x,y) \coloneqq \sum_{\lambda} \sum_{\ell=1}^{m_\lambda}  D_\lambda^{-\alpha} \phi_{\lambda, \ell}(x) \phi_{\lambda, \ell}(y).
\]
Note that any group $G$ that acts isometrically on the manifold, also acts on the eigenspace of Laplacian via orthogonal matrices. Since orthogonal matrices preserve the inner product we conclude that
\[
    K_{H^s(\mathcal{M})}(gx,gy) = K_{H^s(\mathcal{M})}(x,y),
\]
for any $g \in G$, which means that the Sobolev kernel is shift-invariant. However, this is clearly not $G$-invariant since it produces small bump functions, which need not be invariant.

\subsection{Kernel Ridge Regresion (KRR)} \label{app:krr}

Consider a Positive-Definite Symmetric (PDS) kernel $K(.,.)$ on a smooth, compact, boundaryless manifold with $H$ denoting its RKHS. The objective of  Kernel Ridge Regression (KRR) estimator is to introduce the RKHS norm to the ERM objective to make sure of finding smooth interpolators:
\begin{align}
    \min_{f \in H} \Big \{ \frac{1}{n}\sum_{i=1}^n(f(x_i ) - y_i)^2 + \eta\|f\|^2_{H}\Big\},
\end{align}
where $\eta$ denotes the regularization parameter that balances the bias and variance terms. Here, the objective function takes a closed-form solution to the represented theorem for kernels. This gives an efficient estimator, which is termed KRR in the literature. 

However, this estimator need not be $G$-invariant even when trained on invariant data. To see why, note that as long as the space $H$ includes non-invariant functions, there is a chances that we find a non-invariant function optimizing the above objective due to the observation noise. Thus, the only way to make sure that the KRR estimator is $G$-invariant is to impose the assumption of having $G$-invariant kernels, which translated to group averaging over the Sobolev kernel:
\begin{align}
    K^G_{H^s(\mathcal{M})}(x,y)\coloneqq \frac{1}{|G|}\sum_{g \in G} K_{H^s(\mathcal{M})}(gx,y).
\end{align}
This method is unfortunately not computationally feasible, even though it achieves minimax optimal generalization bounds for learning under invariances with kernels \citep{tahmasebi2023exact}.

\section{Proofs} \label{app:proofs}

\subsection{Minimal generating set of a Group} \label{app:minimal}

Here, we restate and prove the following lemma on the size of the minimal generating set in group theory for completeness.
\begin{proposition} \label{prop:minial}
    The minimal generating set $S$ of a finite group $G$, has a size $\rho(G) \coloneqq |S| \leq \log_2 (|G|)$.
\end{proposition}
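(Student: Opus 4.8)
# Proof Proposal for Proposition \ref{prop:minial}

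The plan is to build a generating set greedily and show that each new generator at least doubles the size of the subgroup generated so far. Start with the trivial subgroup $H_0 = \{e\}$. Given $H_i = \langle s_1, \ldots, s_i \rangle$, if $H_i \neq G$, pick any element $s_{i+1} \in G \setminus H_i$ and set $H_{i+1} = \langle s_1, \ldots, s_{i+1} \rangle$. The key observation is that $H_i$ is a \emph{proper} subgroup of $H_{i+1}$, so by Lagrange's theorem $|H_{i+1}|$ is a multiple of $|H_i|$ and strictly larger, hence $|H_{i+1}| \geq 2|H_i|$.

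Iterating this bound, after $k$ steps we have $|H_k| \geq 2^k |H_0| = 2^k$. Since $G$ is finite, the process must terminate at some step $k = \rho'$ with $H_{\rho'} = G$ (it cannot go on forever, as the subgroup sizes are strictly increasing and bounded by $|G|$). At termination, $2^{\rho'} \leq |H_{\rho'}| = |G|$, so $\rho' \leq \log_2(|G|)$. Since $\{s_1, \ldots, s_{\rho'}\}$ is a generating set of size $\rho'$, and $\rho(G)$ is the size of the \emph{minimal} generating set, we conclude $\rho(G) \leq \rho' \leq \log_2(|G|)$.

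I would present this cleanly by formalizing the chain $\{e\} = H_0 \subsetneq H_1 \subsetneq \cdots \subsetneq H_{\rho'} = G$ and invoking Lagrange's theorem for the doubling step. One small point to state carefully is \emph{why} $H_i \subsetneq H_{i+1}$: this holds because $s_{i+1} \in H_{i+1}$ but $s_{i+1} \notin H_i$ by the choice of $s_{i+1}$ from the complement. The only genuine ingredient is Lagrange's theorem; there is no real obstacle here, the argument is elementary. The main thing to get right is simply the bookkeeping that each step multiplies the order by an integer factor $\geq 2$, so the number of steps is at most $\log_2 |G|$.
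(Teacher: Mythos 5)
Your proof is correct and takes essentially the same doubling argument as the paper: each newly added generator lies outside the subgroup generated so far, so the order at least doubles, giving $2^{k} \le |G|$ and hence the $\log_2(|G|)$ bound. The only cosmetic difference is that you build the chain greedily and invoke Lagrange's theorem for the index-$\ge 2$ step, whereas the paper runs the same induction along prefixes $G_k$ of a fixed minimal generating set and gets the doubling directly from the disjointness of the coset $g_{k+1}G_k$ from $G_k$.
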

\begin{proof}
    Consider the minimal generating set $S = \{ g_1, g_2, \dots, g_{|S|} \}$ of the finite group $G$. For each $k \in \{1, 2, \dots, |S|\}$, define $G_k = \langle g_1, g_2, \dots, g_k \rangle$. 
    
    The identity $e$ is not equal to any of $g_k$, and hence cannot be a member of any $G_n$, since it can always be produced by combining an element with its inverse. Moreover, for all $k \in \{1, 2, \dots, |S|\}$, we know that $g_{k+1} \notin G_k$, since otherwise $\langle g_1, g_2, \dots, g_{k}, g_{k+2}, \dots g_{|S|} \rangle = G $ which contradicts the minimality of the generating set $S$ for the group $G$. Therefore, $g_{n + 1} G_n$ the left coset of $G_n$ is disjoint from $G_n$. Additionally, by definition, we know that $g_{n + 1} G_n \bigcup G_n \subseteq G_{n + 1}$. Hence, $|G_{n + 1}| \geq |g_{n + 1} G_n | + | G_n | = 2 |G_n| $. By induction, $2^{|S|} = 2^{|S|} |G_1| \leq |G_{|S|}| = |G| $ which establishes the claim.
\end{proof}

\subsection{Constrained Optimization} \label{app:project}

In this section, we preset a detailed analysis of the constrained quadratic optimization problem that is used in \cref{alg:alg}.

\begin{proposition}[Projection into invariant subspace of eigenspaces]\label{prop:projection}
    The optimization problem,
    \[
    \widehat{f}_{\lambda} \coloneqq \argmin_{f_{\lambda}} &~ \sum_{\ell=1}^{m_\lambda} (f_{\lambda,\ell} - \widetilde{f}_{\lambda,\ell})^2, \numberthis{eq:project_appendix} \\
    \textrm{s.t.} & \quad \forall g \in S: D^\lambda(g) f_\lambda = f_\lambda,
    \]
    with $|S| = m$, has a closed form solution,
    \[
    \widehat{f}_{\lambda} = \widetilde{f}_{\lambda} - {B^\lambda}^\top (B^\lambda {B^\lambda}^\top)^\dagger (B^\lambda \widetilde{f}_{\lambda}),
    \]
    where
    \[
    B^\lambda = \begin{bmatrix}
    D^\lambda(g_1) - I \\
    D^\lambda(g_2) - I \\
    \vdots \\
    D^\lambda(g_m) - I
    \end{bmatrix},
    \]
    and $\dagger$ denotes Moore–Penrose inverse.
\end{proposition}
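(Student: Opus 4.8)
The plan is to recognize \eqref{eq:project_appendix} as an orthogonal projection onto a subspace of $\mathbb{R}^{m_\lambda}$ and to compute that projection explicitly. First I would observe that the feasible set
\[
    \mathcal{C}_\lambda \coloneqq \{ f_\lambda \in \mathbb{R}^{m_\lambda} : D^\lambda(g) f_\lambda = f_\lambda \ \ \forall g \in S\}
\]
is exactly the kernel of the matrix $B^\lambda$, since $D^\lambda(g) f_\lambda = f_\lambda$ is equivalent to $(D^\lambda(g) - I) f_\lambda = 0$, and stacking these conditions over all $g \in S$ gives $B^\lambda f_\lambda = 0$. In particular $\mathcal{C}_\lambda$ is a linear subspace, so the problem $\min_{f_\lambda \in \mathcal{C}_\lambda} \|f_\lambda - \widetilde f_\lambda\|_2^2$ has a unique minimizer, namely the orthogonal projection $\Pi_{\mathcal{C}_\lambda}(\widetilde f_\lambda)$ of $\widetilde f_\lambda$ onto $\mathcal{C}_\lambda = \ker B^\lambda$.

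Next I would invoke the standard fact that the orthogonal projector onto $\ker B^\lambda = (\operatorname{row}(B^\lambda))^\perp$ is $I - \Pi_{\operatorname{row}(B^\lambda)}$, and that the orthogonal projector onto the row space of any matrix $B^\lambda$ is ${B^\lambda}^\top (B^\lambda {B^\lambda}^\top)^\dagger B^\lambda$. This is precisely the defining property of the Moore–Penrose pseudoinverse: writing ${B^\lambda}^\dagger = {B^\lambda}^\top (B^\lambda {B^\lambda}^\top)^\dagger$ (valid because ${B^\lambda}^\dagger B^\lambda {B^\lambda}^\top = {B^\lambda}^\top$ for the pseudoinverse, e.g. via the SVD $B^\lambda = U\Sigma V^\top$), one gets ${B^\lambda}^\dagger B^\lambda = V_r V_r^\top$, the projector onto the row space, where $V_r$ collects the right singular vectors with nonzero singular values. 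Substituting, the minimizer is
\[
    \widehat f_\lambda = \big(I - {B^\lambda}^\top (B^\lambda {B^\lambda}^\top)^\dagger B^\lambda\big)\widetilde f_\lambda
    = \widetilde f_\lambda - {B^\lambda}^\top (B^\lambda {B^\lambda}^\top)^\dagger (B^\lambda \widetilde f_\lambda),
\]
which is the claimed formula. To make this fully rigorous I would either (i) verify the four Penrose conditions for the purported projector, or (ii) diagonalize via the SVD of $B^\lambda$ and check the identity block-wise on the singular subspaces; the SVD route also cleanly shows uniqueness and that the expression is independent of any choice in the pseudoinverse.

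The only subtlety—not really an obstacle—is that $B^\lambda {B^\lambda}^\top$ is in general singular (its rank equals $\operatorname{rank}(B^\lambda) \le m_\lambda$, while $B^\lambda {B^\lambda}^\top$ is $|S| m_\lambda \times |S| m_\lambda$), so one genuinely needs the pseudoinverse rather than an ordinary inverse; the SVD argument handles this automatically since $(B^\lambda {B^\lambda}^\top)^\dagger$ just inverts the nonzero singular values squared. A second minor point is checking that $\mathcal{C}_\lambda$ is nonempty, but this is immediate: the constant-in-$\ell$ directions fixed by every $D^\lambda(g)$ (in particular $f_\lambda = 0$) always lie in it, and more to the point the $G$-invariant subspace of $V_\lambda$ is exactly $\mathcal{C}_\lambda$ by the discussion preceding the proposition, so the projection is precisely onto the invariant part of the eigenspace—hence the name ``projection into invariant subspace of eigenspaces.'' I would close by remarking that this gives an $O(m_\lambda^3)$ (indeed $O(|S|^2 m_\lambda^2 + |S| m_\lambda^3)$) closed form, matching the complexity claims used in the main theorem.
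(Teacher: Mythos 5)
Your proposal is correct, and it reaches the formula by a different route than the paper. The paper argues variationally: it forms the Lagrangian for the equality-constrained least-squares problem, obtains the stationarity condition $a^* = a - {B^\lambda}^\top \xi$, substitutes into the constraint to get the (generally underdetermined) system $B^\lambda {B^\lambda}^\top \xi = B^\lambda a$, and then shows the specific multiplier $\xi^* = (B^\lambda {B^\lambda}^\top)^\dagger B^\lambda a$ gives the minimizer by decomposing any other solution as $\xi^* + \xi_0$ with $\xi_0 \in \ker(B^\lambda {B^\lambda}^\top)$ and using orthogonality of ${B^\lambda}^\top \xi^*$ and ${B^\lambda}^\top \xi_0$. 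You instead identify the feasible set as $\ker B^\lambda$, note the minimizer is the orthogonal projection of $\widetilde f_\lambda$ onto it, and use the pseudoinverse identity ${B^\lambda}^\dagger = {B^\lambda}^\top (B^\lambda {B^\lambda}^\top)^\dagger$ (verified by SVD or the Penrose conditions), so that $I - {B^\lambda}^\top (B^\lambda {B^\lambda}^\top)^\dagger B^\lambda$ is exactly the projector onto $\ker B^\lambda$. Your route gives uniqueness and well-definedness of the expression for free, at the cost of invoking the standard projector/pseudoinverse identity; the paper's Lagrangian route is more self-contained but must separately handle the non-uniqueness of the multiplier. Both correctly require the Moore--Penrose inverse, since $B^\lambda {B^\lambda}^\top$ has rank at most $m_\lambda$ while being of size $|S| m_\lambda \times |S| m_\lambda$, a point you flag. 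Two immaterial quibbles: nonemptiness of the feasible set needs no discussion (it is a subspace containing $0$; the ``constant-in-$\ell$'' direction you mention need not be fixed by $D^\lambda(g)$), and the paper's stated per-eigenspace cost is $\mathcal{O}(|S|^3 m_\lambda^3)$ via an SVD of $B^\lambda {B^\lambda}^\top$, slightly cruder than your count but sufficient for the main theorem.
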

\begin{proof}
    For better readability, we define $B(g_i) \coloneqq D^\lambda(g_i) - I$, where $I \in \mathbb{R}^{m_\lambda \times m_\lambda}$ is the identity matrix of size $m_\lambda$, then,
    \[
    B^\lambda = \begin{bmatrix}
    B(g_1) \\
    B(g_2) \\
    \vdots \\
    B(g_m)
    \end{bmatrix}.
    \]
    For ease of notation, let $a \coloneqq \widetilde{f}_{\lambda} \in \mathbb{R}^{m_\lambda}$ and $a^* \coloneqq \widehat{f}_{\lambda} \in \mathbb{R}^{m_\lambda} $, then the optimization problem (\ref{eq:project_appendix}) can be rewritten as,
    \[
    a^* = \min_{a^{\prime}} \frac{1}{2} \| a - a^{\prime} \|^2 \textrm{ subject to } B^\lambda a^{\prime} = 0.\numberthis{eq:proj_opt_problem}
    \]
    Now, we need to show that the projection of $a$ onto the subspace defined by $\{a^{\prime} \: | \; Ba^{\prime} = 0\}$ has the following analytical form,
    \[
    a^* = a - {B^\lambda}^\top (B^\lambda {B^\lambda}^\top)^\dagger (B^\lambda a).
    \]
    We form the Lagrangian,
    \[
    \mathcal{L}(a^{\prime}, \lambda) = \frac{1}{2} \| a - a^{\prime} \|^2 + \xi^\top B^\lambda a,
    \]
    where $\xi \in \mathbb{R}^{m_\lambda} $ is the vector of Lagrange multipliers. By taking gradients,
    \[
    \frac{\partial \mathcal{L}}{\partial a^{\prime}} = (a^{\prime} - a) + {B^\lambda}^\top \xi = 0, 
    \]
    thus,
    \[
    a^* = a - {B^\lambda}^\top \xi \numberthis{eq:astar}.
    \]
    Substituting back into the constraint $B^\lambda a^* = 0$,
    \begin{align}
    B^\lambda ( a - {B^\lambda}^\top \xi) = B^\lambda a - B^\lambda {B^\lambda}^\top \xi = 0. \label{eq:linear_sys}
    \end{align}
    Hence, $\xi$ satisfies the above linear system which may have infinite number of solutions. We claim that the choice of $\xi^* = (B^\lambda {B^\lambda}^\top)^\dagger B^\lambda a$ leads to the optimal solution of optimization problem~(\ref{eq:proj_opt_problem}). The objective of optimization~(\ref{eq:proj_opt_problem}) is $\| a - a^*\|^2 = \| {B^\lambda}^\top \xi \|_2^2 $. Any solution $\xi$ to the linear system~(\ref{eq:linear_sys}), can be decomposed as $\xi = \xi^* + \xi_0 $ where $\xi_0$ is in the nullspace of $B^\lambda {B^\lambda}^\top$. Hence,
    \[
     \| {B^\lambda}^\top \xi \|_2^2 =  \| {B^\lambda}^\top (\xi^* + \xi_0) \|_2^2 
      \overset{\text{(I)}}{=} \| {B^\lambda}^\top \xi^* \|_2^2 + \| {B^\lambda}^\top \xi_0 \|_2^2 \geq \| {B^\lambda}^\top \xi^* \|_2^2.
    \]
    (I) follows since ${B^\lambda}^\top \xi^*$ and ${B^\lambda}^\top \xi_0$ are orthogonal w.r.t. each other. Placing $\xi^* = (B^\lambda {B^\lambda}^\top)^\dagger B^\lambda a$ in \Cref{eq:astar} concludes the proof.
\end{proof}

\begin{remark}[Time complexity of optimization in each eigenspace]\label{remark:time}
    We arbitrarily chose to use the closed-form solution of the optimization problem (\ref{eq:project_appendix}) instead of iterative approaches. In the closed form solution, we need to calculate the pseuedoinverse of matrix $B^\lambda {B^\lambda}^\top \in \mathbb{R}^{|S| m_\lambda \times |S| m_\lambda}$ which can be done via singular value decomposition (SVD) in $O(|S|^3 m_\lambda^3) $. The other operations are matrix multiplications that are dominated by this part in terms of computational complexity.
\end{remark}

\subsection{Main Theorem} \label{app:main}

\main* 

\begin{proof}
    To prove \cref{thrm:main}, we use \cref{alg:alg}. Let us start by calculating the time and oracle complexity of the algorithm. Given a dataset $\mathcal{S}$ of size $n$, we first compute
    \begin{align}
        \widetilde{f}_{\lambda,\ell} = \frac{1}{n} \sum_{i=1}^n y_i \phi_{\lambda,\ell}(x_i),
    \end{align}
    for each $\lambda$ such that $D_\lambda \leq D = n^{1/(1+\alpha)}$, and each $\ell \in [m_\lambda]$. This requires $\mathcal{O}(n^{1 + 1/(1+\alpha)})$ oracle calls and can be accomplished in time 
    $\mathcal{O}(n^{1 + 1/(1+\alpha)})$. 

    Next, we solve the following constrained quadratic program:
    \begin{align}
        \widehat{f}_{\lambda, \ell} \gets \argmin_{f_{\lambda,\ell}} &~ \sum_{\ell=1}^{m_\lambda} (f_{\lambda,\ell} - \widetilde{f}_{\lambda,\ell})^2, \\
        \text{s.t.} &\quad \forall g \in S: D^\lambda(g) f_\lambda = f_\lambda.
    \end{align}
    This is done for each $\lambda$ such that $D_\lambda \leq n^{1/(1+\alpha)}$. Note that to even set up this program, we need $\mathcal{O}(|S| m_{\lambda}^2)$ oracle calls to find the constraints. We have 
    \begin{align}
        (D^\lambda(g))_{\ell, \ell'} = \langle \phi_{\lambda, \ell}(x), \phi_{\lambda, \ell'}(gx)\rangle_{L^2(\mathcal{M})}
    \end{align}
    for each $\ell, \ell' \in [m_{\lambda}]$. 

    Therefore, the total oracle complexity of the proposed algorithm is
    \begin{align}
        \mathcal{O}\left(\sum_{\lambda: D_\lambda \leq n^{1/(1+\alpha)}} |S| m_{\lambda}^2 + n^{(2+\alpha)/(1+\alpha)}\right).
    \end{align}
    We have already shown in \cref{prop:minial} that one can use a generator set with $\rho(G) \leq \log(|G|)$. Moreover, note that 
    \begin{align}
        \sum_{\lambda: D_\lambda \leq n^{1/(1+\alpha)}} m_{\lambda}^2 = \mathcal{O}(n^{2/(1+\alpha)}). 
    \end{align}
    Therefore, the oracle complexity is
    \begin{align}
        \mathcal{O}\left(\log(|G|) n^{2/(1+\alpha)} + n^{(2+\alpha)/(1+\alpha)}\right).
    \end{align}

Let us now calculate the time complexity of finding the estimator. We have already established that we can compute the empirical estimation in time $\mathcal{O}(n^{1 + 1/(1+\alpha)})$. Next, we need to solve the constrained quadratic program with $\log(|G|)$ constraints and $m_{\lambda}$ variables for each $\lambda$ such that $D_\lambda \leq n^{1/(1+\alpha)}$. Using the proposed algorithm in \cref{app:project} and also \cref{remark:time}, we can solve each of these constrained quadratic programs in time $\mathcal{O}(\log^3(|G|) m_\lambda^3)$. Therefore, the total time complexity of this step is bounded by
\begin{align}
    \mathcal{O}\left(\sum_{\lambda: D_\lambda \leq n^{1/(1+\alpha)}} \log^3(|G|) m_\lambda^3\right) = \mathcal{O}\left(\log^3(|G|) n^{3/(1+\alpha)}\right).
\end{align}
This proves that the total time complexity of \cref{alg:alg} is 
\begin{align}
    \mathcal{O}\left(\log^3(|G|) n^{3/(1+\alpha)} + n^{(2+\alpha)/(1+\alpha)}\right).
\end{align}

Finally, note that given $\widehat{f}$, one can evaluate it on new unlabeled data $x \in \mathcal{M}$ using the formula:
\begin{align}
    \widehat{f}(x) = \sum_{\lambda : D_\lambda \leq D} \sum_{\ell=1}^{m_\lambda} \widehat{f}_{\lambda,\ell} \phi_{\lambda,\ell}(x),
\end{align}
with $D = n^{1/(1+\alpha)}$, which requires both time and oracle complexity of $\mathcal{O}(n^{1/(1+\alpha)})$.

To complete the proof, we need to study the convergence of the population risk of the proposed estimator. We first note that
\begin{align}
    \mathcal{R}(\widehat{f}) = \mathbb{E}[\| \widehat{f} - f^\star\|^2_{L^2(\mathcal{M})}] \le 
        2 \mathbb{E}[\| \widehat{f} - f^\star_{\leq D}\|^2_{L^2(\mathcal{M})}]
        +  2 \mathbb{E}[\| f^\star_{>D}\|^2_{L^2(\mathcal{M})}],
\end{align}
where $f^\star_{\leq D}$ denotes the orthogonal projection of the function $f^\star$ onto the space of eigenfunctions with eigenvalues satisfying $D_\lambda \leq D$. Moreover, $f^\star_{>D} = f^\star - f^\star_{\leq D}$. 

First, let us upper bound the second term. Note that, according to the assumption, $f^\star \in H^s(\mathcal{M})$. Thus, 
\begin{align}
    \mathbb{E}[\| f^\star_{>D}\|^2_{L^2(\mathcal{M})}] &= \sum_{\lambda: D_\lambda > D} \sum_{\ell=1}^{m_\lambda} (f^\star_{\lambda,\ell})^2 \\
    & = \sum_{\lambda: D_\lambda > D} \sum_{\ell=1}^{m_\lambda} D_\lambda^{-\alpha} D_\lambda^{\alpha}(f^\star_{\lambda,\ell})^2 \\
    & \leq D^{-\alpha} \sum_{\lambda: D_\lambda > D} \sum_{\ell=1}^{m_\lambda} D_\lambda^{\alpha} (f^\star_{\lambda,\ell})^2 \\
    & \leq D^{-\alpha} \sum_{\lambda} \sum_{\ell=1}^{m_\lambda} D_\lambda^{\alpha} (f^\star_{\lambda,\ell})^2 \\
    & = D^{-\alpha} \|f^\star\|_{H^s(\mathcal{M})}^2.
\end{align}
Now we focus on the first term. Note that
\begin{align}
    \mathbb{E}[\| \widehat{f} - f^\star_{\leq D}\|^2_{L^2(\mathcal{M})}] = \sum_{\lambda: D_\lambda \leq D} \sum_{\ell = 1}^{m_\lambda} \mathbb{E}[|\widehat{f}_{\lambda,\ell} - f^\star_{\lambda,\ell}|^2]. 
\end{align}
According to the definition, we have 
\begin{align}
    f^\star_{\lambda,\ell} = \mathbb{E}_x[f^\star(x)\phi_{\lambda, \ell}(x)] = \mathbb{E}_{x,y}[y\phi_{\lambda, \ell}(x)],
\end{align}
for each $\lambda,\ell$. Moreover, $\widetilde{f}_{\lambda,\ell}$ is the empirical estimation obtained from data:
\begin{align}
    \widetilde{f}_{\lambda,\ell} = \frac{1}{n} \sum_{i=1}^n y_i \phi_{\lambda,\ell}(x_i).
\end{align}
Thus, we obtain
\begin{align}
    \mathbb{E}[|\widetilde{f}_{\lambda,\ell} - f^\star_{\lambda,\ell}|^2] &= \frac{1}{n} \mathbb{E}\left[| y \phi_{\lambda,\ell}(x) - \mathbb{E}[y\phi_{\lambda,\ell}(x)] |^2\right] \\
    & = \frac{1}{n} \mathbb{E}\left[| \epsilon \phi_{\lambda,\ell}(x) + f^\star(x) \phi_{\lambda,\ell}(x) - \mathbb{E}[f^\star(x)\phi_{\lambda,\ell}(x)] |^2\right] \\
    & = \frac{1}{n} \left( \sigma^2 \mathbb{E}[\phi_{\lambda,\ell}^2] + \mathbb{E}\left[|f^\star(x) \phi_{\lambda,\ell}(x) - \mathbb{E}[f^\star(x)\phi_{\lambda,\ell}(x)] |^2\right] \right) \\
    & \leq \frac{1}{n} \left( \sigma^2 + \mathbb{E}[f^\star(x)^2 \phi_{\lambda,\ell}^2(x)] \right) \\
    & \leq \frac{1}{n} \left( \sigma^2 + \|f^\star\|^2_{L^{\infty}(\mathcal{M})} \right),
\end{align}
where we used the orthonormality of the eigenfunctions $\phi_{\lambda,\ell}$. Then, summing this up to dimension $D$ gives:
\begin{align}
    \mathbb{E}[\| \widetilde{f} - f^\star_{\leq D}\|^2_{L^2(\mathcal{M})}] \leq \frac{D}{n} \left( \sigma^2 + \|f^\star\|^2_{L^{\infty}(\mathcal{M})} \right).
\end{align}
Note that, by definition, $\widehat{f} = P_G \widetilde{f}$, where $P_G: L^2(\mathcal{X}) \to L^2(\mathcal{X})$ is the orthogonal projection operator onto the invariant functions. Therefore, we have
\begin{align}
    \mathbb{E}[\| \widehat{f} - f^\star_{\leq D}\|^2_{L^2(\mathcal{M})}]  &= 
    \mathbb{E}[\| P_G\widetilde{f} - f^\star_{\leq D}\|^2_{L^2(\mathcal{M})}]\\
    & = \mathbb{E}[\|P_G \widetilde{f} - P_Gf^\star_{\leq D}\|^2_{L^2(\mathcal{M})}] 
    \\& \le  \mathbb{E}[\|\widetilde{f} - f^\star_{\leq D}\|^2_{L^2(\mathcal{M})}] \\
    &\le  \frac{D}{n} \left( \sigma^2 + \|f^\star\|^2_{L^{\infty}(\mathcal{M})} \right),
\end{align}
where the penultimate step follows from $P_G f^\star_{\leq D} = f^\star_{\leq D}$.

Therefore, we can combine the two terms to derive the following population risk bound:
\begin{align}
     \mathcal{R}(\widehat{f}) = \mathbb{E}[\| \widehat{f} - f^\star\|^2_{L^2(\mathcal{M})}] \leq \frac{D}{n} \left( \sigma^2 + \|f^\star\|^2_{L^{\infty}(\mathcal{M})} \right) + D^{-\alpha} \|f^\star\|_{H^s(\mathcal{M})}^2.
\end{align}
We can now specify the above bound to \(D = n^{1/(1+\alpha)}\), which is used in the algorithm, to get:
\begin{align}
    \mathcal{R}(\widehat{f}) = \mathbb{E}[\| \widehat{f} - f^\star\|^2_{L^2(\mathcal{M})}] \leq n^{-\alpha/(1+\alpha)} \left( \sigma^2 + \|f^\star\|^2_{L^{\infty}(\mathcal{M})} \right) + n^{-\alpha/(1+\alpha)} \|f^\star\|_{H^s(\mathcal{M})}^2,
\end{align}
which is equivalent to 
\begin{align}
    \mathcal{R}(\widehat{f}) = \mathcal{O}(n^{-\alpha/(1+\alpha)}).
\end{align}
This completes the proof.

\begin{remark}
    Note that other choices of \(D\) may or may not yield better bounds depending on the sparsity of the solution. For this sparsity-unaware upper bound that we use, such a choice of \(D\) is optimal. Additionally, since we focus on polynomial time algorithms, we cannot choose exponentially large \(D\) even if they deliver gains in sample complexity.
\end{remark}

\end{proof}

\section{Experimental Results}\label{app:exp}

The plots for the experiments discussed in \cref{exper} are presented below.

\begin{figure}[h!]
    \centering
    \includegraphics[width=0.8\textwidth]{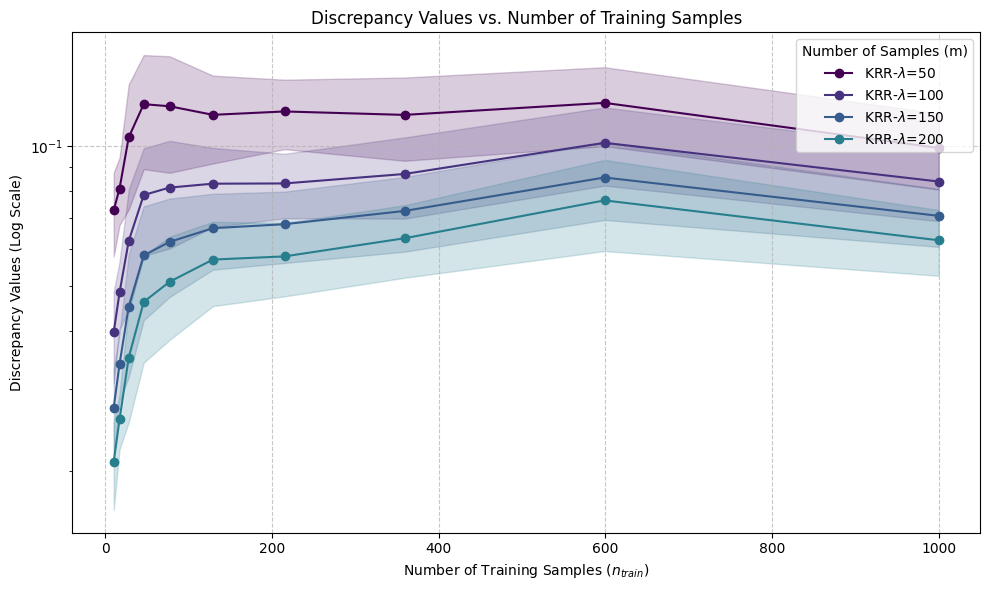}
    \caption{Invariance Discrepancy measure of Kernel Ridge Regression (\texttt{KRR}) for various choices of the regularization parameter $\lambda$. The resulting estimator, \texttt{KRR}, is not invariant with respect to the group $G$ of sign averages $\{\pm 1\}^d$, whereas \texttt{Spec-Avg} is $G$-invariant by construction. Each point in the plot represents an average over 10 different random seeds. The Invariance Discrepancy measure used for this plot is defined as  $ \sup_{x \in \mathcal{X}, g \in G} |\widehat{f}(x) - \widehat{f}(g x)|,$
    where $\widehat{f}$ is the estimator. The set $\mathcal{X}$ consists of 100 points uniformly sampled from the interval $[-1, 1]^d$, independently and identically distributed.
    }
    \label{fig:exp1}
\end{figure}

\begin{figure}[h!]
    \centering
    \includegraphics[width=0.8\textwidth]{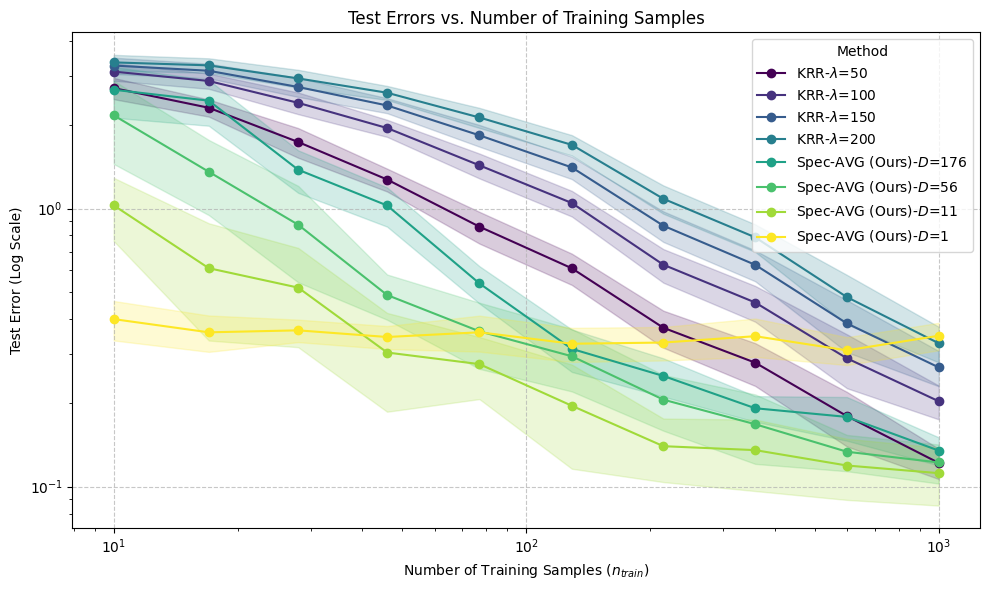}
    \caption{Test error (empirical excess population risk) of \texttt{KRR} for different choices of the regularization parameter $\lambda$ and \texttt{Spec-Avg} for different choices of the sparsity parameter $D$. Conceptually, higher values of $\lambda$ and lower values of $D$ encourage sparser representations for the estimators \texttt{KRR} and \texttt{Spec-Avg}, respectively. As suggested by our theory, it can be observed that test error rates of the same order can be achieved by \texttt{Spec-Avg} and \texttt{KRR} with appropriate choices of hyperparameters. Note that the test errors are shown on a log scale. Their almost linear behavior implies that they are polynomial functions of the number of training samples with comparable orders. We note that each point in the plot represents an average over 10 different random seeds.}
    \label{fig:exp2}
\end{figure}

\end{document}